\documentclass{article} 
\usepackage{iclr2021_conference,times}
\usepackage{booktabs}

\usepackage{amsmath,amsfonts,bm}









\def\eqref#1{equation~\ref{#1}}









\def\1{\bm{1}}










\DeclareMathAlphabet{\mathsfit}{\encodingdefault}{\sfdefault}{m}{sl}
\SetMathAlphabet{\mathsfit}{bold}{\encodingdefault}{\sfdefault}{bx}{n}

\def\gA{{\mathcal{A}}}

\def\gC{{\mathcal{C}}}

\def\gH{{\mathcal{H}}}

\def\gL{{\mathcal{L}}}

\def\gO{{\mathcal{O}}}

\def\gS{{\mathcal{S}}}










\newcommand{\E}{\mathbb{E}}



\usepackage{amsmath}
\usepackage{amssymb}
\usepackage{graphicx}
\usepackage{xcolor}
\usepackage{subcaption}
\usepackage{amsmath, amssymb,amsthm}
\usepackage[font=footnotesize]{caption}
\usepackage{hyperref}
\definecolor{mydarkblue}{rgb}{0,0.08,0.45}
\hypersetup{ %
    pdftitle={},
    pdfauthor={},
    pdfsubject={},
    pdfkeywords={},
    pdfborder=0 0 0,
    pdfpagemode=UseNone,
    colorlinks=true,
    linkcolor=mydarkblue,
    citecolor=mydarkblue,
    filecolor=mydarkblue,
    urlcolor=mydarkblue,
    pdfview=FitH}

\usepackage{url}
\usepackage{amsmath}
\usepackage{amssymb}
\usepackage{graphicx}
\usepackage{subcaption}
\usepackage{amsmath, amssymb,amsthm}
\usepackage{todonotes}
\usepackage{enumitem}
\usepackage{wrapfig}
\usepackage{bbm}
\usepackage{subcaption}
\usepackage[ruled]{algorithm2e}
\newtheorem{theorem}{Theorem}

\setlength{\belowdisplayskip}{1pt} \setlength{\belowdisplayshortskip}{1pt}
\setlength{\abovedisplayskip}{1pt} \setlength{\abovedisplayshortskip}{1pt}

\usepackage{xspace}
\title{INFOrmation Prioritization through \\ EmPOWERment in Visual Model-Based RL}


\author{Homanga Bharadhwaj \thanks{Work done during Homanga's internship at Google. Correspondence to \texttt{hbharadh@cs.cmu.edu} } \\
Carnegie Mellon University
\And
 Mohammad Babaeizadeh\\
Google Research, Brain Team
\And
 Dumitru Erhan \\
Google Research, Brain Team
\And
Sergey Levine\\
Google Research, Brain Team\\
University of California Berkeley
}


\setlength{\textfloatsep}{2pt plus 1.0pt minus 2.0pt}

%
\newcommand{\algoName}{\texttt{InfoPower}\xspace}

\iclrfinalcopy 
\begin{document}

\maketitle

\begin{abstract}
Model-based reinforcement learning (RL) algorithms designed for handling complex visual observations typically learn some sort of latent state representation, either explicitly or implicitly. Standard methods of this sort do not distinguish between functionally relevant aspects of the state and irrelevant distractors, instead aiming to represent all available information equally. We propose a modified objective for model-based RL that, in combination with mutual information maximization, allows us to learn representations and dynamics for visual model-based RL without reconstruction in a way that explicitly prioritizes functionally relevant factors. The key principle behind our design is to integrate a term inspired by variational empowerment into a state-space model based on mutual information. This term prioritizes information that is correlated with action, thus ensuring that functionally relevant factors are captured first. Furthermore, the same empowerment term also promotes faster exploration during the RL process, especially for sparse-reward tasks where the reward signal is insufficient to drive exploration in the early stages of learning. We evaluate the approach on a suite of vision-based robot control tasks with natural video backgrounds, and show that the proposed prioritized information objective outperforms state-of-the-art model based RL approaches with higher sample efficiency and episodic returns. \href{https://sites.google.com/view/information-empowerment}{https://sites.google.com/view/information-empowerment}

\end{abstract}

\section{Introduction}
\vspace*{-0.2cm}

Model-based reinforcement learning (RL) provides a promising approach to accelerating skill learning: by acquiring a predictive model that represents how the world works, an agent can quickly derive effective strategies, either by planning or by simulating synthetic experience under the model. However, in complex environments with high-dimensional observations (e.g., images), modeling the full observation space can present major challenges. While large neural network models have made progress on this problem~\citep{visualforesight,ha2018world,dreamer,e2c,svvp}, effective learning in visually complex environments necessitates some mechanism for learning representations that \emph{prioritize} functionally relevant factors for the current task. This needs to be done without wasting effort and capacity on \emph{irrelevant} distractors, and without detailed reconstruction. Several recent works have proposed contrastive objectives that maximize mutual information between observations and latent states~\citep{infonce,cvrl,cpc,curl}. While such objectives avoid reconstruction, they still do not distinguish between relevant and irrelevant factors of variation. We thus pose the question: can we devise non-reconstructive representation learning methods that explicitly prioritize information that is most likely to be functionally relevant to the agent?

In this work, we derive a model-based RL algorithm from a combination of representation learning via mutual information maximization~\citep{benpoole} and empowerment~\citep{empowerment}. The latter serves to drive both the representation and the policy toward exploring and representing functionally relevant factors of variation. By integrating an empowerment-based term into a mutual information framework for learning state representations, we effectively \emph{prioritize} information that is most likely to have functional relevance, which mitigates distractions due to irrelevant factors of variation in the observations. By integrating this same term into policy learning, we further improve exploration, particularly in the early stages of learning in sparse-reward environments, where the reward signal provides comparatively little guidance.

Our main contribution is~\algoName, a model-based RL algorithm for control from image observations that integrates empowerment into a mutual information based, non-reconstructive framework for learning state space models. Our approach explicitly prioritizes information that is most likely to be functionally relevant, which significantly improves performance in the presence of time-correlated distractors (e.g., background videos), and also accelerates exploration in environments when the reward signal is weak.
We evaluate the proposed objectives on a suite of simulated robotic control tasks with explicit video distractors, and  demonstrate up to 20\% better performance in terms of cumulative rewards at 1M environment interactions with 30\% higher sample efficiency at 100k interactions.

\begin{figure}
    \centering
 \includegraphics[width=\textwidth]{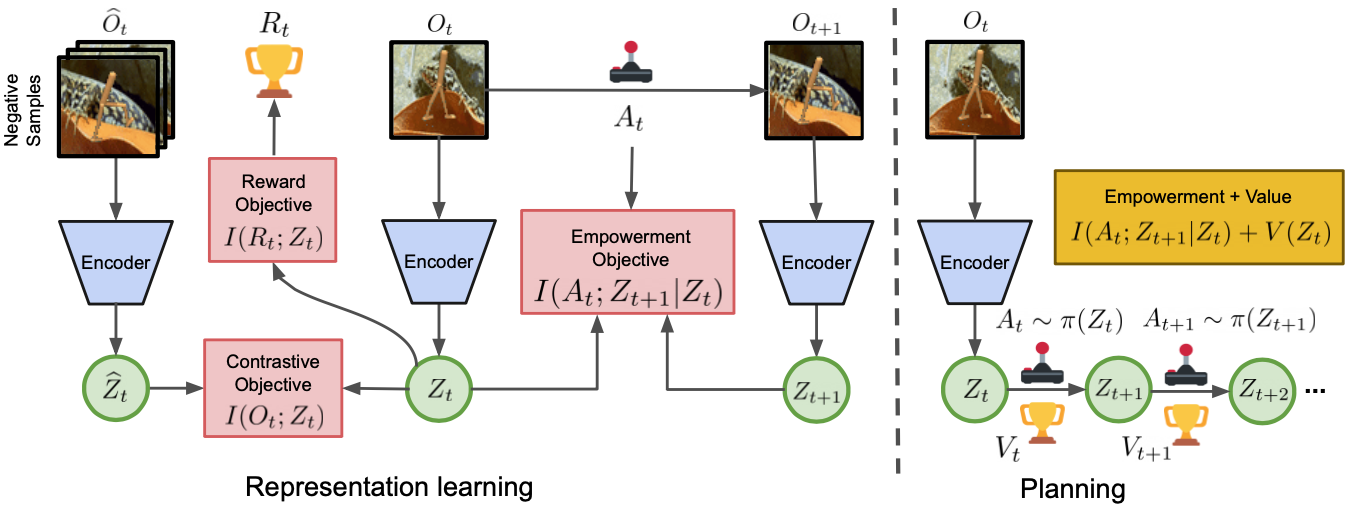}
    \caption{Overview of \algoName. $I(\gO_t;Z_t)$ is the contrastive learning objective for learning an encoder to map from image $\gO$ to latent $Z$. $I(A_{t-1};Z_t|Z_{t-1})$ is the empowerment objective that prioritizes encoding controllable representations in $Z$. $-I(i_{t+1}; Z_{t+1}|Z_t,A_t)$ helps learn a latent forward dynamics model so that future $Z_{t+k}$ can be predicted from current $Z_t$. $I(R_t;Z_t)$ helps learn a reward prediction model, such that the agent can learn a plan $A_t,..A_{t+k},..$ through latent rollouts. Together, this combination of terms produces a latent state space model for MBRL that captures all necessary information at convergence, while prioritizing the most functionally relevant factors via the empowerment term.}
    \label{fig:main}
\end{figure}

\vspace*{-0.3cm}
\section{Problem Statement and Notation}\vspace*{-0.3cm}

A partially observed Markov decision process (POMDP) is a tuple $(\gS,\gA,T,R,\gO)$ that consists of states  $s\in\gS$, actions $a\in \gA$, rewards $r \in R$, observations $o\in\gO$, and a state-transition distribution $T(s'|s,a)$. In most practical settings, the agent interacting with the environment doesn't have access to the actual states in $\gS$, but to some partial information in the form of observations $\gO$.
The underlying state-transition distribution $T$ and reward distribution $R$ are also unknown to the agent.

In this paper, we consider the observations $o\in \gO$ to be high-dimensional images, and so the agent should learn a compact representation space $Z$ for the latent state-space model. The problem statement is to learn effective representations from observations $\gO$ when there are visual distractors present in the scene, and plan using the learned representations to maximize the cumulative sum of discounted rewards, $J = \E [\sum_t \gamma^{t-1}r_t]$. The value of a state $V(Z_t)$ is defined as the expected cumulative sum of discounted rewards starting at state $Z_t$.

We use  $q(\cdot)$ to denote parameterized variational approximations to learned distributions. We denote random variables with capital letters and use lowercase letters to denote particular realizations (e.g., $z_t$ denotes the value of $Z_t$).
Since the underlying distributions are unknown, we evaluate all expectations through Monte-Carlo sampling with observed state-transition tuples $(o_t,a_{t-1},o_{t-1},z_t, z_{t-1},r_t)$.

\vspace*{-0.2cm}
\section{Information Prioritization for the Latent State-Space Model}
\vspace*{-0.2cm}
Our goal is to learn a latent state-space model with a representation $Z$ that prioritizes capturing functionally relevant parts of observations $\gO$, and devise a planning objective that explores with the learned representation. To achieve this, our key insight is integration of empowerment in the visual model-based RL pipeline. For representation learning we maximize MI $\max_Z I(\gO,Z)$ subject to a prioritization of the empowerment objective $\max_Z I(A_{t-1};Z_{t}|Z_{t-1})$. For planning, we maximize the empowerment objective along with reward-based value with respect to the policy $\max_A I(A_{t-1};Z_{t}|Z_{t-1}) + I(R_t;Z_t)$.

In the subsequent sections, we elaborate on our approach, \algoName, and describe lower bounds to MI that yield a tractable algorithm.

\vspace*{-0.1cm}
\subsection{Learning Controllable Factors and Planning through Empowerment}
\label{sec:controllable}

\begin{wrapfigure}[18]{r}{0.55\textwidth}
\vspace*{-0.5cm}
    \centering
    \includegraphics[width=\linewidth]{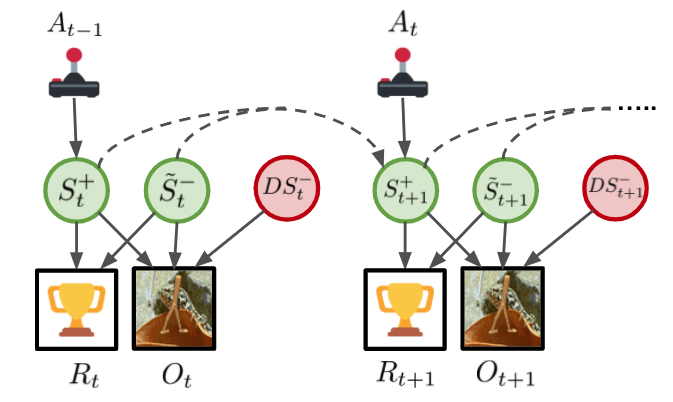}
    \caption{PGM showing decomposition of state $S$ into controllable parts $S^+$ (directly influenced by actions $A$), parts not influenced by actions that still influence the reward, $\Tilde{S}^-$, and distractors $DS^-$. $\Tilde{S}_{t+1}^-$ may be influenced by $\Tilde{S}_t^-$ (arrow not shown to reduce clutter) but not by $\Tilde{S}_t^+$.
}
    \label{fig:distractorgeneral}
\end{wrapfigure}

Controllable representations are features of the observation that correspond to entities which the agent can influence through its actions. For example, in quadrupedal locomotion, this could include the joint positions, velocities, motor torques, and the configurations of any object in the environment that the robot can interact with. For robotic manipulation, it could include the joint actuators of the robot arm, and the configurations of objects in the scene that it can interact with. Such representations are denoted by $S^+$ in Fig.~\ref{fig:distractorgeneral}, which we can formally define through conditional independence as the smallest subspace of $S$, $S^+ \leq S$,
such that $I(A_{t-1};S_t|S^+_t)=0$. This conditional independence relation can be seen in Fig.~\ref{fig:distractorgeneral}.
We explicitly prioritize the learning of such representations in the latent space by drawing inspiration from variational empowerment~\citep{empowerment}. 

\begin{wrapfigure}[28]{R}{0.6\textwidth} %
\vspace*{-0.8cm}
\begin{algorithm}[H]
\SetEndCharOfAlgoLine{}
\SetKwComment{Comment}{// }{}
\BlankLine
Initialize dataset $\mathcal{D}$ with random episodes.  \;
Initialize model parameters $\phi,\chi,\psi,\eta$. \;
Initialize dual variable $\lambda$. \;
\While{not converged}{
  \For{update step $c=1..C$}{
    \BlankLine\Comment{Model learning}
    Sample data $\{(a_t,o_t,r_t)\}_{t=k}^{k+L}\sim\mathcal{D}$. \;
    Compute latents $z_t\sim p_\phi(z_t|z_{t-1},a_{t-1},o_t)$. \;
    Calculate $\gL$ based on section~\ref{sec:practical}. \;
      $(\phi,\chi,\psi,\eta) \leftarrow (\phi,\chi,\psi,\eta) + \nabla_{\phi,\chi,\psi,\eta}\gL$ \;
      $\lambda \leftarrow \lambda - \nabla_{\lambda}\gL$\;
    \BlankLine\Comment{Behavior learning}
    Rollout latent plan, $\mathcal{S}\leftarrow\mathcal{S}\cup\{z_t,a_t,r_t\}$\;
     $V(z_t) \approx \E_\pi[\ln q_\eta(r_t|z_t) + \ln q_\psi(a_{t-1}| z_{t},z_{t-1})]$\;
    Update policy $\pi$ and value model\;
  }
  \BlankLine\Comment{Environment interaction}
  \For{time step $t=0..T-1$}{
    $z_t\sim p_\phi(z_t|z_{t-1},a_{t-1},o_t)$;  $a_t\sim\pi(a_t|z_t)$\;
    $r_t,o_{t+1}\leftarrow\texttt{env.step(}a_t\texttt{)}.$ \;
  }
  Add data $\mathcal{D}\leftarrow\mathcal{D}\cup\{(o_t,a_t,r_t)_{t=1}^T\}.$ \;
  \BlankLine
}
\caption{Information Prioritization in Visual Model-based RL (\algoName)}
\label{alg:main}
\end{algorithm}
\end{wrapfigure}
The empowerment objective can be cast as maximizing a conditional information term $I(A_{t-1};Z_{t}|Z_{t-1}) = \gH(A_{t-1}|Z_{t-1}) - \gH(A_{t-1}|Z_{t},Z_{t-1})$. The first term $\gH(A_{t-1}|Z_{t-1}) $ encourages the chosen actions to be as diverse as possible, while the second term $- \gH(A_{t-1}|Z_{t},Z_{t-1})$ encourages the representations $Z_t$ and $Z_{t+1}$ to be such that the action $A_t$ for transition is predictable. While prior approaches have used empowerment in the model-free setting to learn policies by exploration through intrinsic motivation~\citep{empowerment}, we specifically use this objective in combination with MI maximization for prioritizing the learning of controllable representations from distracting images in the latent state-space model.

We include the same empowerment objective in \emph{both} representation learning and policy learning. For this, we augment the maximization of the latent value function that is standard for policy learning in visual model-based RL~\citep{sutton1991dyna},
with $\max_A I(A_{t-1};Z_{t}|Z_{t-1})$. This objectives complements value based-learning and further improves exploration by seeking controllable states. We empirically analyze the benefits of this in sections~\ref{sec:sample} and~\ref{sec:ablation}.

In Appendix~\ref{sec:theorem1and2} we next describe two theorems regarding learning controllable representations. We observe that the $\max \sum_tI(A_{t-1};Z_{t}|Z_{t-1})$ objective for learning latent representations $Z$, when used along with the planning objective, provably recovers controllable parts of the observation $\gO$, namely  $S^+$. This result in Theorem 1 is important because in practice, we may not be able to represent every possible factor of variation in a complex environment. In this situation, we would expect that when $|Z| \ll |\gO| $, learning $Z$ under the objective $\max \sum_t I(A_{t-1};Z_{t}|Z_{t-1})$ would encode $S^+$.

We further show through Theorem 2 that the inverse information objective alone can be used to train a latent-state space model and a policy through an alternating optimization algorithm that converges to a local minimum of the objective $\max \sum_t I(A_{t-1};Z_{t}|Z_{t-1})$ at a rate inversely proportional to the number of iterations. In Section~\ref{sec:sample} we empirically show how this objective helps achieve higher sample efficiency compared to pure value-based policy learning.
 
\subsection{Mutual Information maximization for  Representation Learning}
\label{sec:contrastive}
For visual model-based RL, we need to learn a representation space $Z$, such that a forward dynamics model defining the probability of the next state in terms of the current state and the current action can be learned. The objective for this is $\sum_t  - I(i_t;Z_t|Z_{t-1}, A_{t-1})$. Here, $i_t$ denotes the dataset indices that determine the observations $p(o_t|i_t) = \delta(o_t - o_{t'})$. In addition to the forward dynamics model, we need to learn a reward predictor by maximizing $\sum_t I(R_t;Z_t)$, such that the agent can plan ahead in the future by rolling forward latent states, without having to execute actions and observe rewards in the real environment.

Finally, we need to learn an encoder for encoding observations $\gO$ to latents $Z$. Most successful prior works have used reconstruction-loss as a natural objective for learning this encoder~\citep{svvp,planet,dreamer}. A reconstruction-loss can be motivated by considering the objective $I(\gO,Z)$ and computing its BA lower bound~\citep{ba}. 
$ I(o_t;z_t) \geq \E_{p(o_t,z_t)}[\log q_{\phi'}(o_t|z_t)] + \gH(p(o_t))$.
The first term here is the reconstruction objective, with $q_{\phi'}(o_t|z_t)$ being the decoder, and the second term can be ignored as it doesn't depend on $Z$. However, this reconstruction objective explicitly encourages encoding the information from every pixel in the latent space (such that reconstructing the image is possible) and hence is prone to not ignoring distractors.

In contrast, if we consider other lower bounds to $I(\gO,Z)$, we can obtain tractable objectives that do not involve reconstructing high-dimensional images. 
 We can obtain an \textit{NCE}-based lower bound~\citep{infonce}:
$  I(o_t;z_t) \geq \E_{q_\phi(z_t|o_t)p(o_t)}[\log f_\theta(z_t,o_t) - \log \sum_{t'\neq t} f_\theta(z_{t},o_{t'})], $
where $q_\phi(z_t|o_t)$ is the learned encoder, $o_t$ is the observation at timestep $t$ (positive sample), and all observations in the replay buffer $o_{t'}$ are negative samples. $f_\theta(z_t,o_{t}) = \exp{(\hat{z}_t^TW_\theta z_{t})}$, where $\hat{z}_t^T$ is an embedding for $o_t$. The lower-bound is a form of contrastive learning as it maximizes compatibility of $z_t$ with the corresponding observation $o_t$ while minimizing compatibility with all other observations across time and batch.

Although prior work has explored NCE-based bounds for contrastive learning in RL~\citep{curl}, to the best of our knowledge, prior work has not used this in conjunction with empowerment for prioritizing information in visual model-based RL. Similarly, the \textit{Nguyen-Wainwright-Jordan (NWJ)} bound~\citep{nwj}, which to the best our knowledge has not been used by prior works in visual model-based RL, can be obtained as,
\vspace*{0.1cm}
$$  I(o_t;z_t) \geq \E_{q_\phi(z_t|o_t)p(o_t)}[ f_\theta(z_t,o_t) ] - e^{-1} \E_{q_\phi(z_t|o_t)p(o_t)}e^{f_\theta(z_t,o_t)}, \vspace*{0.1cm}$$
where $f_\theta$ is a \emph{critic}. There exists an optimal critic function for which the bound is tightest and equality holds. 

We refer to the InfoNCE and NWJ lower bound based objectives as \textit{contrastive learning}, in order to distinguish them from a reconstruction-loss based objective, though both are bounds on mutual information. We denote a lower bound to MI by  \underbar{$I(o_t,z_t)$}. We empirically find the NWJ-bound to perform slightly better than the NCE-bound for our approach, explained in section~\ref{sec:ablation}. 

\subsection{Overall Objective}
\label{sec:overall}
We now motivate the overall objective, which consists of maximizing mutual information while prioritizing the learning of controllable representations through empowerment in a latent state-space model. Based on the discussions in Sections~\ref{sec:controllable} and~\ref{sec:contrastive}, we define the overall objective for \textit{representation learning} as
$$ \max_{Z_{0:H-1}} \sum_{t=0}^{H-1} I(\gO_t;Z_t)  \;\; \text{s.t.} \;\; \sum_{t=0}^{H-1}\overbrace{\left(- I(i_t;Z_t|Z_{t-1}, A_{t-1}) + I(A_{t-1};Z_{t}|Z_{t-1}) + I(R_t;Z_t) \right)}^{\gC_t}\geq c_0.$$

The objective is to maximize a MI
term $I(\gO_t;Z_t)$ through contrastive learning such that a constraint on $\gC_t$ holds for prioritizing the encoding of forward-predictive, reward-predictive and controllable representations. We define the overall \textit{planning} objective as
$$ \max_{A_{0:H-1}}  \sum_{t=0}^{H-1} I(A_{t-1};Z_{t}|Z_{t-1}) + V(Z_t)\;\;\; ;  A_t = \pi(Z_t) \;\;\;  ; V(Z_t) \approx \sum_t R_t.$$ 

The planning objective is to learn a policy as a function of the latent state $Z$ such that the empowerment term and a reward-based value term are maximized over the horizon $H$.

We can perform the constrained optimization for representation learning through the method of Lagrange Multipliers, by the primal and dual updates shown in Section~\ref{sec:appendixprimaldual}. In order to analyze this objective, let $|\gO| = n$ and $|Z| = d$. Since, $\gO$ corresponds to images and $Z$ is a bottlenecked latent representation, $d\ll n$.

$I(\gO,Z)$ is maximized when $Z$ contains all the information present in $\gO$ such that $Z$ is a sufficient statistic of $\gO$. However, in practice, this is not possible because $|Z|\ll |\gO|$. When $c_0$ is sufficiently large, and the constraint $\sum_t\gC_t \geq c_0$ is satisfied, $Z=[S^+,\Tilde{S}^-]$. Hence, the objective $\max \sum_t I(\gO_t,Z_t)\;\; \text{s.t.} \;\; \sum_t\gC_t \geq c_0$ cannot encode anything else in $Z$, in particular it cannot encode distractors $DS^-$. 

To understand the importance of prioritization through $\gC_t \geq c_0$, consider $\max \sum_t I(\gO_t,Z_t)$ without the constraint. This objective would try to make $Z$ a sufficient statistic of $\gO$, but since $|Z|\ll |\gO|$, there are no guarantees about which parts of $\gO$ are getting encoded in $Z$. This is because both distractors $S^-$ and non-distractors $S^+,D\Tilde{S}^-$ are equally important with respect to $I(\gO,Z)$. Hence, the constraint helps in prioritizing the type of information to be encoded in $Z$.

\vspace*{-0.3cm}
\subsection{Practical Algorithm and Implementation Details}\vspace*{-0.2cm}
\label{sec:practical}

To arrive at a practical algorithm, we optimize the overall objective in section~\ref{sec:overall} through lower bounds on each of the MI terms. For $I(\gO,Z)$ we consider two variants, corresponding to the NCE and NWJ lower bounds described in Section~\ref{sec:contrastive}. We can obtain a variational lower bound on each of the terms in $\gC_t$ as follows, \textcolor{black}{with detailed derivations in Appendix~\ref{sec:derivations}}:

\begin{align*}
  - I(i_t;z_t|a_{t-1}) \geq - \sum_t\E[D_{\text{KL}}(p(z_t|z_{t-1},a_{t-1},o_t)|| q_\chi(z_t|z_{t-1},a_{t-1}))]
\end{align*}
\begin{align*}
     I(r_t; z_t) 
     \geq \E_{p(r_t|o_t)}[\log  q_\eta(r_t|z_t)] + \gH(p(r_t))
\end{align*}
\begin{align*}
    I(a_{t-1};z_{t} | z_{t-1}) \geq \E_{p( o_{t}| z_{t-1},a_{t-1})q_\phi(z_t|o_t)}[\log q_\psi(a_{t-1}| z_{t},z_{t-1})] +  \E [\gH(\pi(a_{t-1}|z_{t-1}))]
\end{align*}

\textcolor{black}{Here, $q_\phi(z_t|o_t)$ is the observation encoder, $ q_\psi(a_{t-1}| z_{t},z_{t-1})$ is the inverse dynamics model, $q_\eta(r_t|z_t)$ is the reward decoder, and $q_\chi(z_t|z_{t-1},a_{t-1}))$ is the forward dynamics model. The inverse model helps in learning representations such that the chosen action is predictable from successive latent states. The forward dynamics model helps in predicting the next latent state given the current latent state and the current action, without having the next observation. The reward decoder predicts the reward (a scalar) at a time-step given the corresponding latent state. We use dense networks for all the models, with details in Appendix~\ref{sec:appendixtraining}.
} 
We denote by \underbar{$\gC_t$}, the lower bound to $\gC_t$ based on the sum of the terms above. We construct a Lagrangian $\gL=$ $\sum_t$ \underbar{$I(o_t;z_t)$} $+$ $\lambda$ (\underbar{$\gC_t$} - $c_0$) and optimize it by primal-dual gradient descent. An outline of this is shown in Algorithm 1.

\noindent\textbf{Planning Objective.} For planning to choose actions at every time-step, we learn a policy $\pi(a|z)$ through value estimates of task reward and the empowerment objective. We learn value estimates with $V(z_t) \approx \E_\pi[\ln q_\eta(r_t|z_t) + \ln q_\psi(a_{t-1}| z_{t},z_{t-1})]$. We estimate $V(z_t)$ similar to Equation 6 of~\citep{dreamer}. The empowerment term $q_\psi(a_{t-1}| z_{t},z_{t-1})$ in policy learning incentivizes choosing actions $a_{t-1}$ such that they can be predicted from consecutive latent states $ z_{t-1},z_t$. This biases the policy to explore controllable regions of the state-space. The policy is trained to maximize the estimate of the value, while the value model is trained to fit the estimate of the value that changes as the policy is updated.  

Finally, we note that the difference in value function of the underlying MDP $Q^\pi(o,a)$ and the latent MDP $\hat{Q}^\pi(z,a)$, where $z\sim q_\phi(z|o)$ is bounded, under some regularity assumptions. We provide this result in Theorem 3 of the Appendix Section~\ref{sec:appendixvaluediff}. 
The overall procedure for model learning, planning, and interacting with the environment is outlined in Algorithm~\ref{alg:main}.

\section{Experiments}

\vspace*{-0.2cm}
Through our experiments, we aim to understand the following research questions:
\begin{enumerate}[
    topsep=0pt,
    noitemsep,
    leftmargin=*,
    itemindent=12pt]
    \item How does \algoName compare with the baselines in terms of episodic returns in environments with explicit background video distractors?
     \item How sample efficient is \algoName when the reward signal is weak ($<100k$  env steps when the learned policy typically doesn't achieve very high rewards)?
    \item How does \algoName compare with baselines in terms of behavioral similarity of latent states?
\end{enumerate}
Please refer to the website for a summary and qualitative visualization results \href{https://sites.google.com/view/information-empowerment}{https://sites.google.com/view/information-empowerment}
\vspace*{-0.2cm}
\subsection{Setup}

We perform experiments with modified DeepMind Control Suite environments~\citep{dmc}, with natural video distractors \textcolor{black}{from ILSVRC dataset~\citep{ILSVRC}} in the background. The agent receives only image observations at each time-step and does not receive the ground-truth simulator state. This is a very challenging setting, because the agents must learn to ignore the distractors and abstract out representations necessary for control. 
While natural videos that are unrelated to the task might be easy to ignore, realistic scenes might have other elements that resemble the controllable elements, but are not actually controllable (e.g., other cars in a driving scenario). To emulate this, we also add distractors that resemble other potentially controllable robots, as shown for example in Fig.~\ref{fig:distractors} (1st and 6th), but are not actually influenced by actions.

\begin{wrapfigure}[17]{r}{0.45\textwidth}
\vspace*{-0.5cm}
    \centering
    \includegraphics[width=\linewidth]{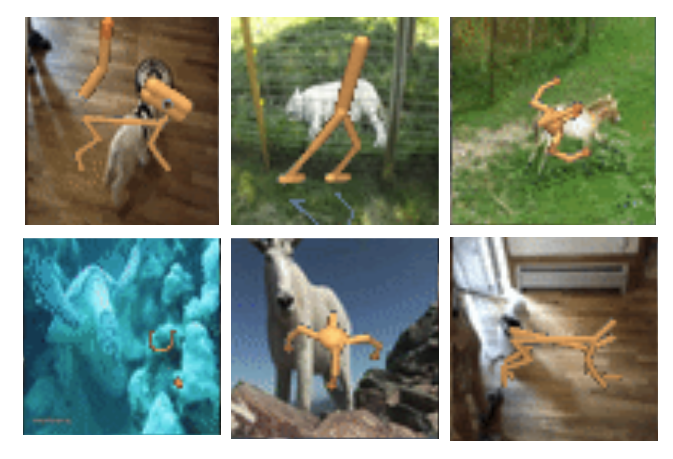}
     \caption{Some of the natural video background distractors in our experiments. The videos change every 50 time-steps. Some backgrounds (for e.g. the top left and the bottom right) have more complex distractors in the form of agent-behind-agent i.e. the background has pre-recorded motion of a similar agent that is being controlled.}
    \label{fig:distractors}
\end{wrapfigure}
\textcolor{black}{In addition to this setting, we also perform experiments with gray-scale distractors based on videos from the Kinetics dataset~\citep{kay2017kinetics}. This setting is adapted exactly based on prior works~\citep{tia,dbc} and we compare directly to the published results.}

We compare \algoName with state-of-the art baselines that also learn world models for control: \textit{Dreamer}~\citep{dreamer}, \textit{C-Dreamer} that is a contrastive version of Dreamer similar to~\citet{cvrl}, \textit{TIA}~\citep{tia} that learns explicit reconstructions for both the distracting background and agent separately, \textit{DBC}~\citep{dbc}, and \textit{DeepMDP}~\citep{deepmdp}. We also compare variants of \algoName with different MI bounds (NCE and NWJ), and ablations of it that remove the empowerment objective from policy learning.

\vspace*{-0.2cm}
\subsection{A behavioral similarity metric based on graph kernels}
\label{sec:behavioralsim}
In order to measure how good the learned latent representations are at capturing the functionally relevant elements in the scene, we introduce a behavioral similarity metric with details in~\ref{sec:appendixmetric}.  
Given the sets $\gS_z = \{z^i\}_{i=1}^n$ and $\gS_{z_{\text{gt}}} = \{z^i_{\text{gt}}\}_{i=1}^n$, we construct complete graphs by connecting every vertex with every other vertex through an edge. The weight of an edge is the Euclidean distance between the respective pairs of vertices. The label of each vertex is an unique integer and corresponding vertices in the two graphs are labelled similarly. 

Let the resulting graphs be denoted as $G_z=(V_z,E_z)$ and $G_{z_{\text{gt}}}=(V_{z_{\text{gt}}},E_{z_{\text{gt}}})$ respectively. Note that both these graphs are \textit{shortest path} graphs, by construction. Let, $e_i=\{u_i,v_i\}$ and $e_j=\{u_j,v_j\}$. We define a kernel $\hat{k}$ that measures similarity between the edges $e_i, e_j $ and the labels on respective vertices. $\hat{k}(e_i,e_j) =$
$$  k_v(l(u_i),l(u_j))k_e(l(e_i),l(e_j))k_v(l(v_i),l(v_j)) + k_v(l(u_i),l(v_j))k_e(l(e_i),l(e_j))k_v(l(v_i),l(u_j)) $$
Here,  the function $l(\cdot)$ denotes the labels of vertices and the weights of edges.  $k_v$ is a Dirac delta kernel i.e. $k_e(x,y) = 1 - \delta(x,y)$ (Note that $\delta(x,y) = 1$ iff $x=y$, else $\delta(x,y) = 0$)  and $k_e$ is a Brownian ridge kernel, $k_e(x,y) = \frac{1}{c}\max(0, c-|x-y|)$, where $c$ is a large number. We define the shortest path kernel to measure similarity between the two graphs, as,
$k(G_z,G_{z_{\text{gt}}}) =\frac{1}{|E_z |} \sum_{e_i\in E_z}\sum_{ e_j \in E_{z_{\text{gt}}}} \hat{k}(e_i,e_j) $.
The value of $k(G_z,G_{z_{\text{gt}}})$ is low when a large number of pairs of corresponding vertices in both the graphs have the same edge length. We expect methods that recover latent state representations which better reflect the true underlying simulator state (i.e., the positions of the joints) to have higher values according to this metric. 

 \vspace*{-0.2cm}
\subsection{Sample efficiency analysis}
\vspace*{-0.2cm}
\label{sec:sample}
In Fig.~\ref{fig:mainresult}, we compare~\algoName and the baselines in terms of episodic returns. This version of~\algoName corresponds to an NWJ bound on MI which we find works slightly better than the NCE bound variant analyzed in section~\ref{sec:ablation}. It is evident that~\algoName achieves higher returns before 1M steps of training quickly compared to the baselines, indicating higher sample efficiency. This suggests the effectiveness of the empowerment model in helping capture controllable representations early on during training, when the agent doesn't take on actions that yield very high rewards.

\begin{figure}\centering
\includegraphics[width=\linewidth]{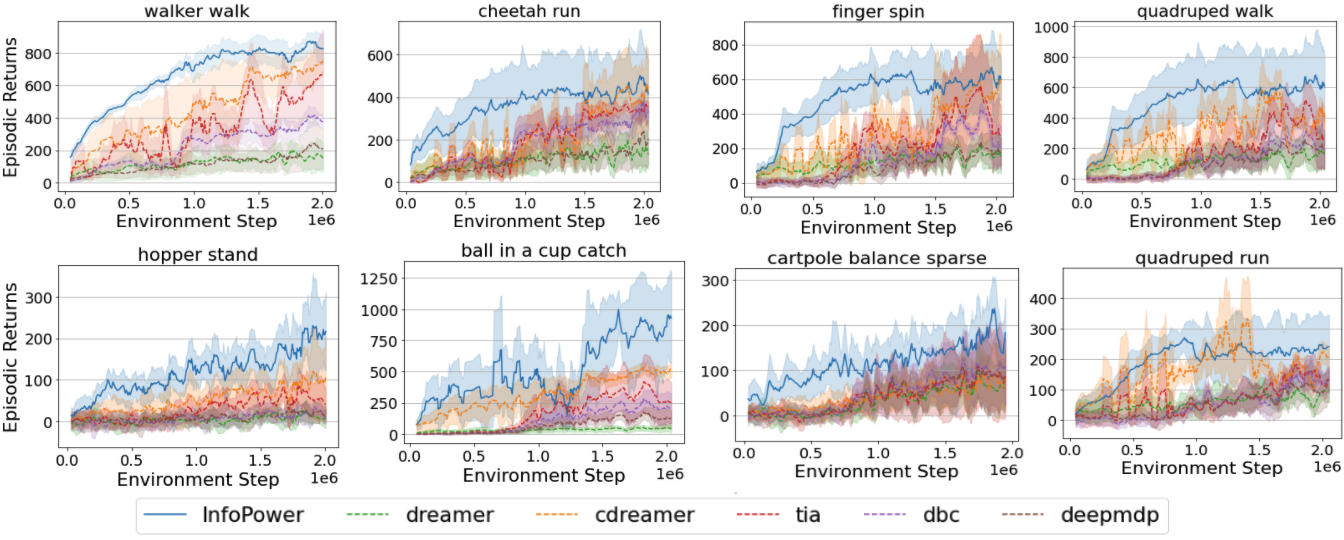}
\caption{Evaluation of \algoName and baselines in a suite of DeepMind Control tasks with natural video distractors in the background. The x-axis denotes the number of environment interactions and the y-axis shows the episodic returns. The S.D is over 4 random seeds. Higher is better.}
\label{fig:mainresult}
\end{figure}
\vspace*{-0.2cm}

\vspace*{-0.2cm}
\subsection{Behavioral similarity of latent states}\vspace*{-0.2cm}
In this section, we analyze how similar are the learned latent representations with respect to the underlying simulator states. The intuition for this comparison is that the proprioceptive features in the simulator state abstract out distractors in the image, and so we want the latent states to be behaviorally similar to the simulator states.

\textbf{Quantitative results with the defined metric.} In Table~\ref{tb:levels}, we show results for behavioral similarity of latent states (Sim), based on the metric in section~\ref{sec:behavioralsim}. We see that the value of Sim for \algoName is around $20\%$ higher than the most competitive baseline, indicating high behavioral similarity of the latent states with respect to the corresponding ground-truth simulator states.

\textbf{Qualitative visualizations with t-sne.} Fig.~\ref{fig:behavioralsim} shows a t-SNE plot of latent states $z\sim q_\phi(z|o)$ for \algoName and the baselines with visualizations of 3 nearest neighbors for two randomly chosen latent states. We see that the state of the agent is similar is each group for \algoName, although the background scenes are significantly different. However, for the baselines, the nearest neighbor states are significantly different in terms of the pose of the agent, indicating that the latent representations encode significant background information. 

\vspace*{-0.3cm}
\subsection{Ablation Studies}\vspace*{-0.2cm}
\label{sec:ablation}
In Fig.~\ref{fig:ablations}, we compare different ablations of \algoName. Keeping everything else the same, and changing only the MI lower bound to NCE, we see that the performance is almost similar or slightly worse. However, when we remove the empowerment objective from policy optimization (the versions with `-Policy' in the plot), we see that performance drops. The drop is significant in the regions $<200k$ environment interactions, particularly in the sparse reward environments - cartpole balance and ball in a cup, indicating the necessity of the empowerment objective in exploration for learning controllable representations, when the reward signal is weak.

\begin{figure}
    \centering
    \includegraphics[width=\linewidth]{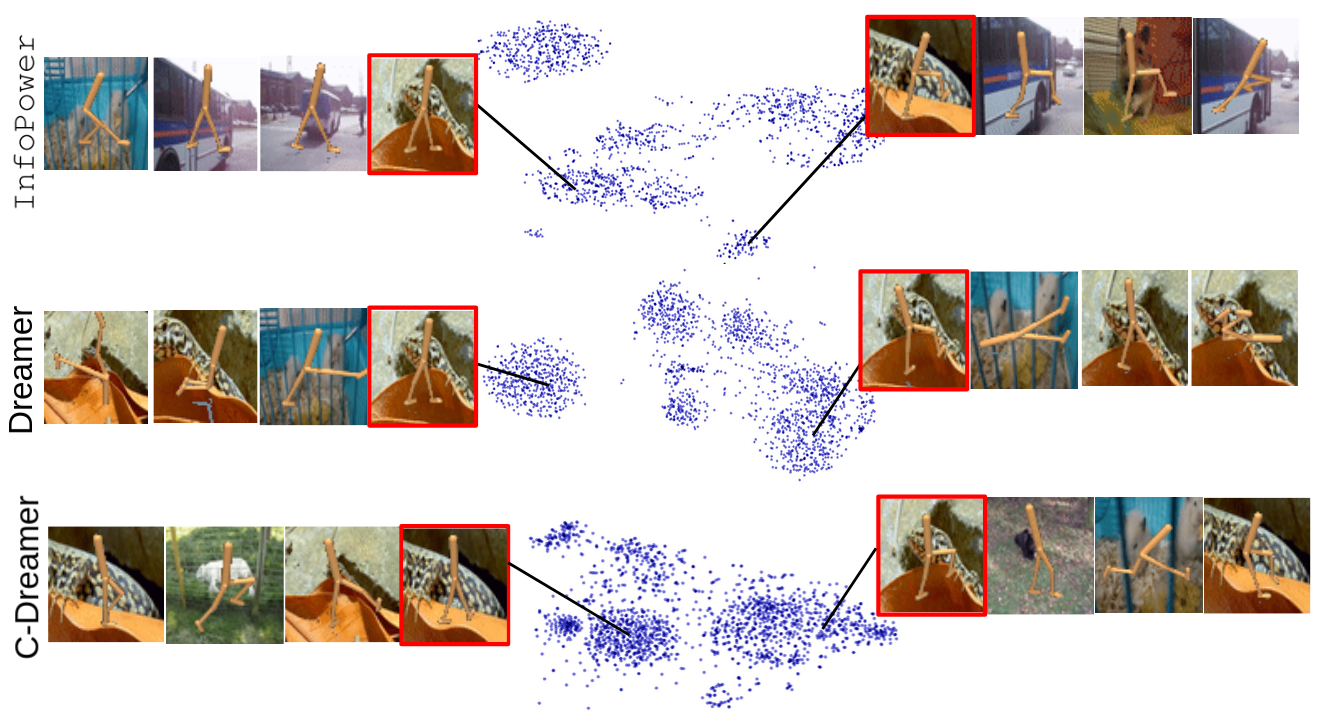}
    \caption{t-SNE plot of latent states $z\sim q_\phi(z|o)$ with visualizations of three nearest neighbors for two randomly sampled points (in red frame).  We see that the state of the agent is similar is each set for \algoName, whereas for Dreamer, and the most competitive baseline C-Dreamer, the nearest neighbor frames have significantly different agent configurations.}
    \label{fig:behavioralsim}
\end{figure}

\begin{figure}
    \centering
    \includegraphics[width=\textwidth]{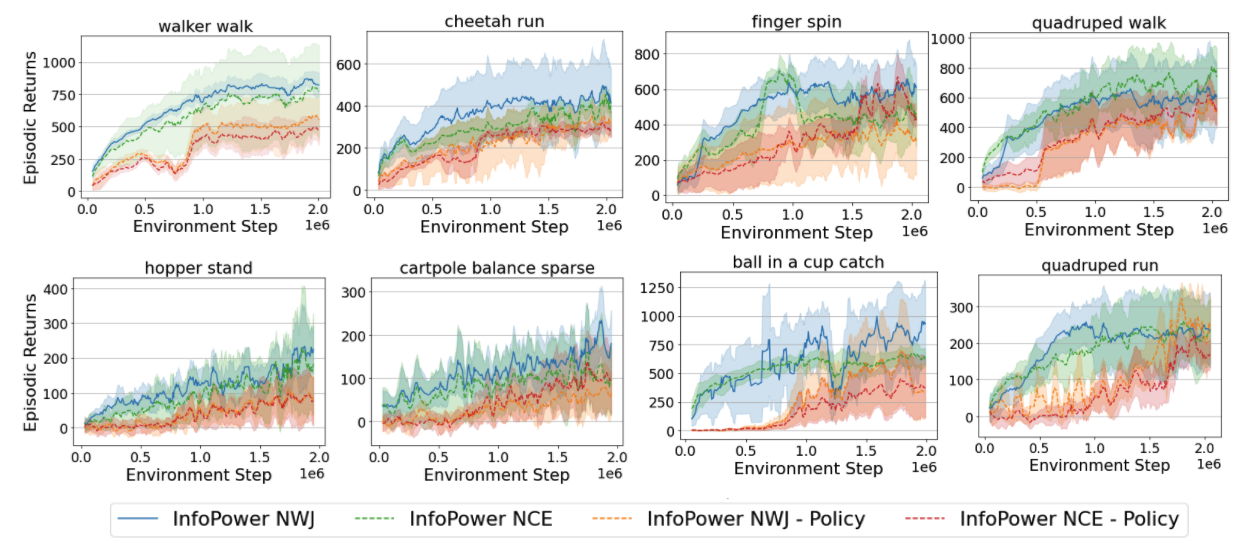}
    \caption{Evaluation of \algoName and ablated variants in a suite of DeepMind Control tasks with natural video distractors in the background. The x-axis denotes the number of environment interactions and the y-axis shows the episodic returns. \algoName-NWJ  and \algoName-NCE are full versions of our method differing only in the MI lower bound. The versions with - Policy do not include the empowerment objective in policy learning, but only use it from representation learning. The S.D is over 4 random seeds. Higher is better.}
    \label{fig:ablations}
\end{figure}

\section{Related Works}
\noindent\textbf{Visual model-based RL.} Recent developments in video prediction and contrastive learning have enabled learning of world-models from images~\citep{e2c, svvp,visualforesight, dreamer,ha2018world, planet, lsp}.
All of these approaches learn latent representations through reconstruction objectives that are amenable for planning. Other approaches have used similar reconstruction based objectives for control, but not for MBRL~\citep{slac,gregor2019shaping}.

\noindent\textbf{MI for representation learning.} Mutual Information measures the dependence between two random variables. The task of learning latent representations $Z$ from images $\gO$ for downstream applications, has been very successful with MI objectives of the form $\max_{f_1,f_2}I(f_1(\gO),f_2(Z))$~\citep{infomax,cmc,cpc,whichmi,nachum2021provable}. Since calculating MI exactly is intractable optimizing MI based objectives, it is important to construct appropriate MI estimators that lower-bound the true MI objective~\citep{infonce,nwj,mine,ba}. The choice of the estimator is crucial, as shown by recent works~\citep{benpoole}, and different estimators yield very different behaviors of the algorithm. We incorporate MI maximization through the NCE~\citep{infonce} and NWJ~\citep{nwj} lower bounds, such that typical reconstruction objectives for representation learning which do not perform well with visual distractors, can be avoided.

\noindent\textbf{Inverse models and empowerment.} Prior approaches have used inverse dynamics models for regularization in representation learning~\citep{poke,zhang2018decoupling} and as bonuses for improving policy gradient updates in RL~\citep{shelhamer2016loss,pathak2017curiosity}. \textcolor{black}{The importance of information theoretic approaches for learning representations that maximize predictive power has been discussed in prior work~\citep{still} and more recently in~\cite{lee2020predictive}.} Empowerment~\citep{empowerment} has been used as exploration bonuses for policy learning~\citep{emp2,emp3}, and for learning skills in RL~\citep{vic,dads,diayn}. In contrast to prior work, we incorporate empowerment both for state space representation learning and policy learning, in a visual model-based RL framework, with the aim of prioritizing the most functionally relevant information in the scene. 

\noindent\textbf{RL with environment distractors.} Some recent RL frameworks have studied the problem of abstracting out only the task relevant information from the environment when there are explicit distractors~\citep{hansen2020generalization,dbc,tia,cvrl}. \citet{dbc} constrain the latent states by enforcing a bisimulation metric, without a reconstruction objective. Our approach is different from this primarily because of the empowerment objective that provides useful signal for dealiasing controllable vs. uncontrollable representations independent of how strong is the observed reward signal.
\citet{tia} model both the relevant and irrelevant aspects of the environment separately, and differ from our approach that prioritizes learning only the relevant aspects.\textcolor{black}{~\citet{nguyenold} used contrastive representations for control. More recently,~\citet{nguyen2021temporal} used temporal predictive coding, and~\citet{cvrl} used InfoNCE based contrastive loss for learning representations while ignoring distractors in visual MBRL.} Incorporating data augmentations for improving robustness with respect to environment variations is an orthogonal line of work~\citep{rad,hansen2020generalization,raileanu2020automatic,curl,drq}, complementary to our approach. 

\section{Conclusion}
In this paper we derived an approach for visual model-based RL such that an agent can learn a latent state-space model and a policy by  explicitly prioritizing the encoding of functionally relevant factors. Our prioritized information objective integrates a term inspired by variational empowerment into a non-reconstructive objective for learning state space models. We evaluate our approach on a suite of vision-based robot control tasks with two sets of challenging video distractor backgrounds. In comparison to state-of-the-art visual model-based RL methods, we observe higher sample efficiency and episodic returns across different environments and distractor settings.

\clearpage
\newpage

\bibliography{iclr2021_conference}
\bibliographystyle{iclr2021_conference}

\clearpage
\newpage
\appendix
\section{Appendix}

\subsection{Theoretical results on controllable representations}
Let $\Phi(\cdot)$ denote the encoder such that $Z=\Phi(\gO)$. $\gO$ is the observation seen by the agent and $S$ is the underlying state. $S^+$ is defined as that part of underlying state $S$ which is directly influenced by actions $A$ i.e. $S^+ \leq S$ s.t. $I(A_{t-1};S_t|S^+_t)=0$.

We next describe two theorems regarding learning controllable representations, with proofs in the Appendix. We observe that the $\max \sum_tI(A_{t-1};Z_{t}|Z_{t-1})$ objective alone for learning latent representations $Z$, along with the planning objective provably recovers controllable parts of the observation $\gO$, namely  $S^+$.  
\begin{theorem}
The objective $\max \sum_t I(A_{t-1};Z_{t}|Z_{t-1})$ provably recovers controllable parts $S^+$ of the observation $\gO$. $S^+$ is defined as that part of underlying state $S$ which is directly influenced by actions $A$ i.e. $S^+ \subset S$ s.t. $I(S_t;A_{t-1}|S^+_t)=0$.
\end{theorem}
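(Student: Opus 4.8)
The plan is to show that the maximizer $Z^\star$ of $\max \sum_t I(A_{t-1};Z_t\mid Z_{t-1})$, subject to the capacity constraint $|Z|\le d$ implicit in the latent bottleneck, satisfies $Z^\star \supseteq S^+$ (it encodes all of the controllable subspace) and, when $d$ is taken to be $|S^+|$, that $Z^\star$ is precisely an invertible re-encoding of $S^+$. The two directions are: (i) an upper bound $I(A_{t-1};Z_t\mid Z_{t-1}) \le I(A_{t-1};S_t\mid S_{t-1}) = I(A_{t-1};S^+_t\mid S^+_{t-1})$ for any $Z$ that is a (stochastic) function of the observation history, using the data-processing inequality and the defining conditional independence $I(A_{t-1};S_t\mid S^+_t)=0$; and (ii) the observation that this upper bound is \emph{attained} by taking $Z_t = S^+_t$, so the optimum must recover $S^+$ up to information-preserving transformations.

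First I would set up the probabilistic model implied by the PGM of Fig.~\ref{fig:distractorgeneral}: $S = [S^+, \tilde S^-, DS^-]$, with $A_{t-1}\to S^+_t$ the only action edge, $S^+_{t-1}\to S^+_t$, $\tilde S^-_{t-1}\to \tilde S^-_t$, $o_t$ a deterministic function of $S_t$, and $Z_t = \Phi(z_{t-1}, a_{t-1}, o_t)$ through the encoder. The key structural fact I would extract is the chain $A_{t-1} \perp (\tilde S^-_t, DS^-_t, \text{history}) \mid S^+_t$, i.e. conditioned on the controllable part, the action carries no further information about anything else in the state. Then, since $Z_t$ is a function of $(o_t, z_{t-1}, a_{t-1})$ and $o_t$ is a function of $S_t$, I apply the data-processing inequality along the Markov chain $A_{t-1} \to S^+_t \to S_t \to (o_t, z_{t-1}) \to Z_t$ (being careful that the conditioning variable $Z_{t-1}$ is itself a function of earlier quantities, so I condition everything on $Z_{t-1}$ and still get $I(A_{t-1};Z_t\mid Z_{t-1}) \le I(A_{t-1};S^+_t\mid Z_{t-1}) \le I(A_{t-1};S^+_t)$, and then note equality $I(A_{t-1};S^+_t\mid S^+_{t-1}) = I(A_{t-1};S^+_t)$ fails in general, so I'd actually compare against $I(A_{t-1};S^+_t\mid S^+_{t-1})$ by also letting $Z_{t-1}$ carry $S^+_{t-1}$). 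Summing over $t$ gives the upper bound $\sum_t I(A_{t-1};Z_t\mid Z_{t-1}) \le \sum_t I(A_{t-1};S^+_t\mid S^+_{t-1})$.

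Next I would exhibit the achiever: set the encoder so that $Z_t$ deterministically equals $S^+_t$ (this is realizable when $d \ge |S^+|$, which is the regime the theorem statement cares about, and the planning objective's role here is to ensure the action distribution is the exploratory/diverse one, so that $\mathcal H(A_{t-1}\mid Z_{t-1})$ is maximal and the inverse-model term is meaningful — otherwise a degenerate policy makes every $Z$ trivially optimal). With $Z_t = S^+_t$ the bound is met with equality, so this choice is optimal. For the converse/uniqueness direction, I would argue that any optimal $Z$ must satisfy $I(A_{t-1};S^+_t \mid Z_t, Z_{t-1}) = 0$ (otherwise the bound is strict), and combined with $|Z|=|S^+|$ and a full-support / non-degeneracy assumption on the action process, this forces $Z_t$ to be an invertible function of $S^+_t$, i.e. $Z$ recovers $S^+$. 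I would state the needed regularity assumptions explicitly: the action policy has full support (so the empowerment mutual information is not vacuously zero), the encoder family is expressive enough to represent the projection onto $S^+$, and the PGM conditional independences of Fig.~\ref{fig:distractorgeneral} hold exactly.

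The main obstacle I expect is handling the \emph{conditioning on $Z_{t-1}$} cleanly in the data-processing step: unlike the unconditional DPI, here the conditioning variable is itself a learned, possibly non-invertible function of the past, so conditioning on $Z_{t-1}$ can in principle either increase or decrease the mutual information, and I must show the inequality still points the right way. I would resolve this by noting that $A_{t-1}$ is chosen based only on $Z_{t-1}$ (the policy is $\pi(a\mid z)$), so $A_{t-1} \perp S_{t-1} \mid Z_{t-1}$ and more importantly $A_{t-1} \perp (\text{everything at time }t) \mid S^+_t$, which lets me write $I(A_{t-1};Z_t\mid Z_{t-1}) = I(A_{t-1}; Z_t, S^+_t \mid Z_{t-1}) - I(A_{t-1};S^+_t\mid Z_t,Z_{t-1}) \le I(A_{t-1};S^+_t\mid Z_{t-1})$ and then bound $I(A_{t-1};S^+_t\mid Z_{t-1})\le I(A_{t-1};S^+_t\mid S^+_{t-1})$ by a second DPI since the useful content of $Z_{t-1}$ for predicting the $A_{t-1}\to S^+_t$ transition factors through $S^+_{t-1}$. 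The secondary subtlety is that ``recovers $S^+$'' is only true up to relabeling/invertible transformation and only when $|Z|\ge|S^+|$; for $|Z|<|S^+|$ the honest statement is that $Z$ encodes a maximal-empowerment sub-feature of $S^+$, which I would flag as a remark rather than prove in full.
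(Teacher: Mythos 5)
Your proposal follows essentially the same route as the paper's proof: an upper bound on the empowerment term via the data-processing inequality along $A_{t-1}\to S^+_t\to S_t\to \gO_t\to Z_t$, the chain rule of mutual information, and the defining conditional independence $I(A_{t-1};S_t|S^+_t)=0$ to arrive at $I(A_{t-1};S^+_t)$, followed by the observation that the bound is attained when the encoder is the identity on $S^+$ and discards $[\tilde{S}^-,DS^-]$. Your treatment is in fact somewhat more careful than the paper's --- the paper silently drops the conditioning on $Z_{t-1}$ in the inequality chain and omits both the uniqueness-up-to-invertible-reparameterization caveat and the full-support/capacity assumptions you flag --- but the core argument is the same.
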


\label{sec:theorem1and2}

\begin{proof}
Based on Fig.~\ref{fig:distractorgeneral} and Fig.~\ref{fig:distractorgeneralappendix}, we have the following
\begin{align*}
  \max I(A_{t-1};Z_{t})  
  &\leq \max  I(A_{t-1};\gO_t)  \;\;\;\; \texttt{Data-Processing Inequality} \\
  &\leq  I(A_{t-1};S_t) \;\;\;\; \texttt{Data-Processing Inequality} \\
  &\leq  I(A_{t-1};S_t,S^+_t) \\
  &\leq  I(A_{t-1};S_t|S^+_t)  + I(A_{t-1};S^+_t) \;\;\;\; \texttt{Chain Rule of MI} \\
  &\leq  I(A_{t-1};S^+_t)  \;\;\;\; \texttt{Conditional independence} 
  \end{align*}
So, the maximum above can be obtained when the encoder $\Phi$ is an identity function over $S^+$ and a zero function over $[\Tilde{S}^-,S^-]$. So, $Z_{t}=S^+_{t}$. Hence, given $Z_{t-1}$, $I(A_{t-1};Z_{t}|Z_{t-1})$ is maximized when $\Phi$ is an identity function over $S^+$. So, $\max \sum_t I(A_{t-1};Z_{t}|Z_{t-1})$ learns the encoder $\Phi$ such that controllable representations are recovered.

\end{proof}

This result is important because in practice, we may not be able to represent every possible factor of variation in a complex environment. In this situation, we would expect that when $|Z| \ll |\gO| $, learning $Z$ under the objective $\max \sum_t I(A_{t-1};Z_{t}|Z_{t-1})$ would encode $S^+$.
We next show that the inverse information objective alone can be used to train a latent-state space model and a policy through an alternating optimization algorithm that converges to a local minimum of the objective $\max \sum_t I(A_{t-1};Z_{t}|Z_{t-1})$
at a rate inversely proportional to the number of iterations.

\begin{theorem} $\max_{\pi,\psi}\sum_t I(A_{t-1};Z_t|Z_{t-1}) = \sum_t\E_{\pi(a_{t-1}|z_{t-1})p(z_{t}|z_{t-1},a_{t-1},o_t)}\log\frac{q_\psi(a_{t-1}|z_{t},z_{t-1})}{\pi(a_{t-1}|z_{t-1})}  $ can be optimized through an alternating optimization scheme that has a convergence rate of $\gO(1/N)$ to a local minima of the objective, where $N$ is the number of iterations. 
\end{theorem}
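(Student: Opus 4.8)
The plan is to recognize the rewritten objective as the Barber--Agakov variational lower bound on the conditional mutual information, and to treat $(\pi,\psi)$ as two coordinate blocks of a single smooth objective
$$\mathcal{F}(\pi,\psi) = \sum_t \E_{\pi(a_{t-1}|z_{t-1})\, p(z_t|z_{t-1},a_{t-1},o_t)}\left[\log q_\psi(a_{t-1}|z_t,z_{t-1}) - \log \pi(a_{t-1}|z_{t-1})\right],$$
which I will show lower-bounds $\sum_t I(A_{t-1};Z_t|Z_{t-1})$ with equality exactly when $q_\psi$ is the true action posterior $p_\pi(a_{t-1}|z_t,z_{t-1})$ induced by $\pi$ and the latent dynamics. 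The first step is therefore to verify this bound through the non-negativity of $\E_{z_t,z_{t-1}}[\KL(p_\pi(a_{t-1}|z_t,z_{t-1})\,\|\,q_\psi(a_{t-1}|z_t,z_{t-1}))]$, and to observe that $\max_\psi \mathcal{F}(\pi,\psi) = \sum_t I(A_{t-1};Z_t|Z_{t-1})$ for every fixed $\pi$; this is precisely the equality asserted in the theorem statement.

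Second, I would spell out the alternating scheme. Holding $\pi$ fixed, the $\psi$-update is a (data-weighted) maximum-likelihood problem whose unique maximizer is $q_\psi = p_\pi(a_{t-1}|z_t,z_{t-1})$ and whose objective is concave in the natural parameters of the family $q_\psi$, so the bound is made tight. Holding $\psi$ fixed, the $\pi$-update maximizes an entropy-regularized pseudo-reward $\E_\pi[\log q_\psi(a_{t-1}|z_t,z_{t-1})] + \gH(\pi(a_{t-1}|z_{t-1}))$, i.e. a soft policy-improvement step implementable by a stochastic / natural-gradient update. I would then note that each block step yields a monotone non-decrease of $\mathcal{F}$, so the sequence $\{\mathcal{F}(\pi_n,\psi_n)\}$ is non-decreasing and bounded above by $\sup_\pi \sum_t I(A_{t-1};Z_t|Z_{t-1})$, which is finite under the assumption that the action space and the admissible policies keep the conditional entropy bounded.

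Third, for the rate I would invoke the standard analysis of two-block coordinate ascent on a smooth nonconvex objective. Under the regularity assumptions that $\mathcal{F}$ is bounded above, that $\nabla_\pi \mathcal{F}$ and $\nabla_\psi \mathcal{F}$ are Lipschitz along the iterate path (which follows from smooth parametric families with bounded parameters and densities bounded away from $0$), and that each block step satisfies a sufficient-ascent inequality $\mathcal{F}(\text{new}) - \mathcal{F}(\text{old}) \geq c\,\|\nabla_{\text{block}}\mathcal{F}\|^2$, telescoping over $N$ iterations gives $\sum_{n=1}^{N} \|\nabla \mathcal{F}(\pi_n,\psi_n)\|^2 \leq \tfrac{1}{c}\big(\mathcal{F}^\star - \mathcal{F}(\pi_0,\psi_0)\big)$, hence $\min_{n\leq N}\|\nabla \mathcal{F}(\pi_n,\psi_n)\|^2 = \mathcal{O}(1/N)$. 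Thus the scheme reaches an approximate stationary point of $\mathcal{F}$ — a local optimum of $\sum_t I(A_{t-1};Z_t|Z_{t-1})$ — at rate $\mathcal{O}(1/N)$.

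The main obstacle is the $\pi$-block: because the expectation defining $\mathcal{F}$ is itself taken under $\pi$ (in both $z_{t-1}$ and $a_{t-1}$), the objective is not concave in $\pi$, and updating $\pi$ simultaneously shifts the sampling measure and the integrand, so the clean ``exact maximization'' picture of EM does not literally apply and the naive monotonicity argument needs care. I would address this by taking the $\pi$-update to be a single projected / natural-gradient ascent step rather than an exact maximization, and by controlling the measure change via a performance-difference-style decomposition together with smoothness bounds on $\log q_\psi$ and $\log\pi$ — exactly what is needed to certify the sufficient-ascent inequality used above. A secondary technical point is keeping the densities bounded away from zero so that $\log q_\psi$, $\log\pi$, and their gradients stay Lipschitz; I would enforce this by restricting $\pi$ and $q_\psi$ to families with a fixed minimum variance / temperature, which is innocuous in practice.
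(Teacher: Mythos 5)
Your proposal is sound but takes a genuinely different route from the paper. You both start from the same place --- the Barber--Agakov bound on $I(A_{t-1};Z_t\mid Z_{t-1})$, tightness when $q_\psi$ equals the action posterior induced by $\pi$, and an alternating two-block scheme --- but you then diverge. The paper derives the \emph{exact} closed-form maximizers for both blocks (the posterior for $q_\psi^*$, and a Boltzmann distribution $\pi^*(a\mid z)\propto\exp(\E[\log q_\psi(a\mid z',z)])$ for the policy), recognizes the resulting pair of updates as an instance of the Blahut--Arimoto algorithm for channel capacity, and imports the known worst-case $\gO(1/N)$ convergence of that algorithm \emph{in objective value}. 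You instead treat the problem as generic smooth nonconvex two-block coordinate ascent, replace the exact $\pi$-update with a gradient step satisfying a sufficient-ascent inequality, and telescope to get $\min_{n\le N}\|\nabla\mathcal F\|^2=\gO(1/N)$, i.e.\ $\gO(1/N)$ convergence \emph{in squared gradient norm} to a stationary point. These are both legitimately ``$\gO(1/N)$'' but measure different quantities; the paper's version is closer to what the theorem statement literally asserts (convergence to a local optimum of the objective), while yours proves approximate stationarity and needs extra regularity (Lipschitz gradients, densities bounded away from zero, bounded parameters) that the Blahut--Arimoto analysis gets for free from the problem's structure. On the other hand, your treatment is more careful on a point the paper glosses over: you correctly note that the sampling distribution over $z_{t-1}$ itself depends on $\pi$ through earlier actions, so the objective is not simply (linear $+$ entropy) concave in $\pi$ across timesteps; the paper's closed-form $\pi^*$ implicitly holds the state visitation fixed, exactly as Blahut--Arimoto holds the channel fixed. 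If you want to match the paper's argument, the missing ingredient is the identification of the alternating updates with Blahut--Arimoto; if you pursue your own route, you should be explicit that you are proving a gradient-norm rate rather than an objective-gap rate.
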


\begin{proof}
Let $z\sim q_\phi(z|o)$ denote a latent state sampled from the encoder distribution. Learning a world model such that reward prediction and inverse information are maximized can be summarized as:
\begin{align*}
    \max_{\phi,\psi, \eta}  \sum_t \E_{\pi(a_t|z_t),q_\phi(z_t|o_t)}\left[\E_{p(r_t|z_t,a_t)}\log q_\eta(r_t|z_t,a_t) + \E_{p(z_{t+1}|z_t,a_t)}\log\frac{q_\psi(a_t|z_{t+1},z_t)}{\pi(a_t|z_t)}  \right]
\end{align*}

The policy $\pi(a_t|z_t)$ is learned by 
\begin{align*}
    \max_{\pi}  \sum_t \E_{\pi(a_t|z_t),q_\phi(z_t|o_t)}\left[\E_{p(r_t|z_t,a_t)}\log q_\eta(r_t|z_t,a_t) + \E_{p(z_{t+1}|z_t,a_t)}\log\frac{q_\psi(a_t|z_{t+1},z_t)}{\pi(a_t|z_t)}  \right]
\end{align*}

Model-based RL in this case involves alternating optimization between policy learning and learning the world model. In the case where there is no reward signal from the environment, i.e. $p(r_t|z_t)$, we have the following objective

\begin{align*}
  \max_{\pi}    \max_{\phi,\psi}  \sum_t \E_{\pi(a_t|z_t),q_\phi(z_t|s_t)p(z_{t+1}|z_t,a_t)}\log\frac{q_\psi(a_t|z_{t+1},z_t)}{\pi(a_t|z_t)}  
\end{align*}
The optimal $q^*_\psi$ give a fixed $\pi$ and $\phi$ can be derived as in page 335 of~\citet{elements}. 

\begin{align*}
   q^*_\psi(a_t|z_{t+1},z_t) = \frac{q_\phi(z_t|s_t)p(z_{t+1}|z_t,a_t)\pi(a_t|z_t)}{\int_a q_\phi(z_t|s_t)p(z_{t+1}|z_t,a_t)\pi(a_t|z_t)}
\end{align*}

The optimal $\pi*$ given a fixed $\psi$ and $\phi$ can be derived as

\begin{align*}
    \pi^*(a_t|z_t) = \frac{\exp{(\E_{q_\phi(z_t|s_t)p(z_{t+1}|z_t,a_t)}[\log q_\psi(a_t|z_{t+1},z_t)]})}{\int_a\exp{(\E_{q_\phi(z_t|s_t)p(z_{t+1}|z_t,a_t)q}[\log q_\psi(a_t|z_{t+1},z_t)]})}
\end{align*}
We note that these alternating iterations correspond to an instantiation of the Blahut-Arimoto algorithm~\citep{blahut} for determining the information theoretic capacity of a channel. Based on~\citep{convergence} we see that this iterative optimization procedure between $q^*_\psi$ and $\pi^*$ converges, and the worst-case rate of convergence is $\gO(1/N)$ where $N$ is the number of iterations.

\end{proof}
This result is useful because it shows that even in the absence of rewards from the environment ($r_t=0 \;\forall t)$, when planning to minimize regret is not possible, the inverse information objective can be used to train a policy that explores to seek out controllable parts of the state-space. When rewards from the environment are present, we can train the policy with this objective and the value estimates obtained from the cumulative rewards during planning. In Section~\ref{sec:sample} we empirically show how this objective helps achieve higher sample efficiency compared to pure value-based policy learning.

\begin{figure}[t]
    \centering
    \includegraphics[width=0.95\textwidth]{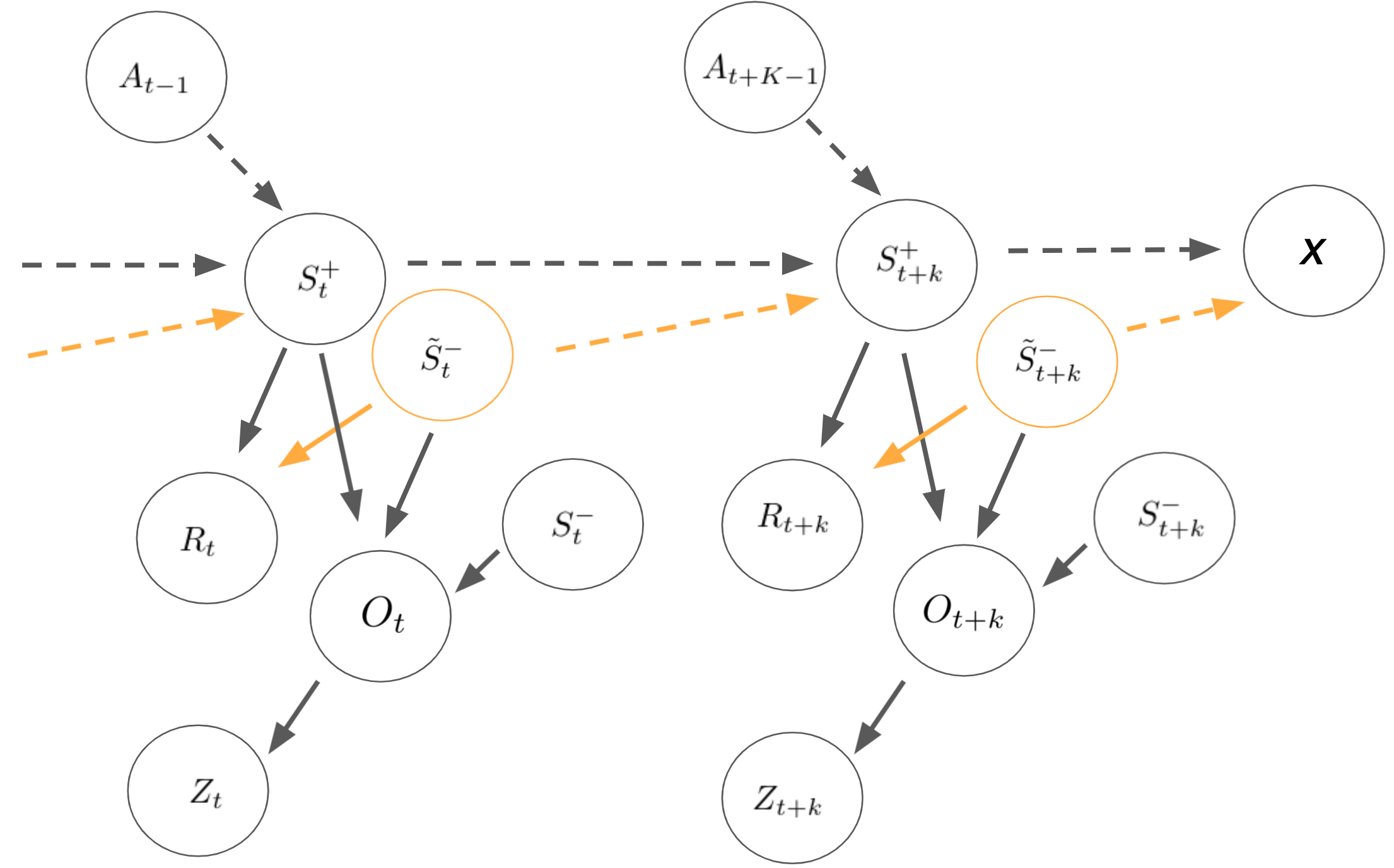}
    \caption{PGM of the MDP with distractor states. The state observed by the agent $\gO$ consists of three parts $S^+$, $\Tilde{S}^-$ and $DS^-$. $S^+$ is the controllable part of the state i.e. it is affected by the actions of the agent, and in turn affects the reward $R$; $\Tilde{S}^-$ is not controllable by the agent but affects the reward $R$ and future $S^+$; $DS^-$ is not controllable by the agent and doesn't affect the reward $R$ and future $S^+$.}
    \label{fig:distractorgeneralappendix}
\end{figure}

\subsection{Primal Dual updates}
\label{sec:appendixprimaldual}
$$ Z_{0:H-1} =  Z_{0:H-1} + \frac{\partial \sum_{t=0}^{H-1} I(\gO_t;Z_t) + \lambda \left(- I(i_t;Z_t|Z_{t-1}, A_{t-1}) + I(A_{t-1};Z_{t}|Z_{t-1}) + I(R_t;Z_t)  - c_0\right) }{\partial Z_{0:H-1}}$$

$$\lambda = \lambda - \left(\underbrace{- I(i_t;Z_t|Z_{t-1}, A_{t-1})}_\text{fwd dynamics model} + \underbrace{I(A_{t-1};Z_{t}|Z_{t-1})}_\text{inv dynamics model} + \underbrace{I(R_t;Z_t)}_\text{rew model}  - c_0\right) $$

\subsection{Forward, Inverse, and Reward Models}
\label{sec:derivations}
\begin{align*}
  - I(i_t;Z_t|Z_{t-1},A_{t-1})  &=  - \int p(z_t,z_{t-1},a_{t-1},i_t,o_t)\log\frac{p(z_t|z_{t-1},a_{t-1},o_t)}{p(z_t|z_{t-1},a_{t-1})} \\
  &= - \int p(z_t,z_{t-1},a_{t-1},o_t)\left(\log\frac{p(z_t|z_{t-1},a_{t-1},o_t)}{q_\chi(z_t|z_{t-1},a_{t-1})}  + \log\frac{q_\chi(z_t|z_{t-1},a_{t-1})}{p(z_t|z_{t-1},a_{t-1})}\right) \\
  &\geq - \int p(z_t,z_{t-1},a_{t-1},o_t)\log\frac{p(z_t|z_{t-1},a_{t-1},o_t)}{q_\chi(z_t|z_{t-1},a_{t-1})} \\
  &= - D_{\text{KL}}(p(z_t|z_{t-1},a_{t-1},o_t)|| q_\chi(z_t|z_{t-1},a_{t-1}))
\end{align*}

We can obtain a variational lower bound on the empowerment
\begin{align*}
    I(a_t;z_{t+k} | z_t) &= \int p(a_t, z_{t+k},z_t) \log\frac{p(a_t|z_{t+k},z_t)}{p(a_t|z_t)} \\
    &=\int p(a_t, z_{t+k},z_t) \left( \log\frac{q(a_t| z_{t+k},z_t)}{p(a_t|z_t)} + \log\frac{p(a_t| z_{t+k},z_t)}{q(a_t| z_{t+k},z_t)} \right) \\
    &\geq \int p(a_t, z_{t+k},z_t) \log\frac{q(a_t| z_{t+k},z_t)}{p(a_t|z_t)}  \\
    &= \int p(a_t, z_{t+k},z_t)\log q(a_t| z_{t+k},z_t) - \int  p(z_t) p(z_{t+k}|a_t,z_t) p(a_t|z_t) \log p(a_t|z_t) \\
    &= \E_{p(a_t, z_{t+k},z_t)}[\log q(a_t| z_{t+k},z_t)] +  \E_{p(z_t) p(z_{t+k}|a_t,z_t)} [\gH(p(a_t|z_t))]
\end{align*}
Here, $q(a_t| z_{t+k},z_t)$ is the inverse dynamics model, and $\gH(p(a_t|z_t))$ is the entropy of the policy.

We can obtain a variational lower bound on the reward model as 
\begin{align*}
     I(r_t; z_t) &=  \int p(r_t, z_t) \log\frac{p(r_t|z_t)}{p(r_t)} \\
     &=  \int p(r_t, z_t) \left( \log\frac{q(r_t|z_t)}{p(r_t)}  +  \log\frac{p(r_t|z_t)}{q(r_t|z_t)}\right)\\
     &= \int p(r_t, z_t)\log\frac{q(r_t|z_t)}{p(r_t)} + KL[p(r_t,z_t) || q(r_t,z_t)]\\
     &\geq \int p(r_t, z_t)\log\frac{q(r_t|z_t)}{p(r_t)} \\
     &= \int p(r_t, z_t) \log q(r_t|z_t) - \int p(r_t, z_t)\log p(r_t)\\
     &= \E_{p(r_t,z_t)}[\log  q(r_t|z_t)] +  Constant 
\end{align*}

Here, $ q(r_t|z_t)$ is the reward decoder. Since the reward is a scalar, reconstructing it is computationally simpler compared to reconstructing high dimensional observations as in the BA bound for the observation model.

\subsection{Value Difference Result}
\label{sec:appendixvaluediff}
We show that the difference in value function of the underlying MDP $Q^\pi(o,a)$ and the latent MDP $\hat{Q}^\pi(z,a)$, where $z\sim q_\phi(z|o)$ is bounded, under some regularity assumptions. Similar to the assumption in~\citep{deepmdp}, let the policy $\pi$ have bounded semi-norm value functions under the total variation distance $D_{\text{TV}}$ i.e.  $|\hat{V}^\pi(z)|_{D_{\text{TV}}}\leq K $ and $|\E_{z_1\sim P}V(z_1) - \E_{z_2\sim Q}V(z_2)| \leq KD_{\text{TV}}(P,Q)$. 
Define the forward dynamics learning objective as $\min_\chi L_T = \min_\chi D_{\text{KL}}(p(o'|o,a) || q_\chi(z'|z,a))$ and the reward model objective as $\min_\eta L_R = \min_\eta D_{\text{KL}}(p(r|o) || q_\eta(r|z)) $

\begin{theorem} The difference between the Q-function in the latent MDP and that in the original MDP is bounded by the loss in reward predictor and forward dynamics model.
$$\E_{(o,a)\sim d^\pi(o,a),z\sim q_\phi(z|o)}[Q^\pi(o,a) - \hat{Q}^\pi(z,a)] \leq \frac{\sqrt{L_R} + \gamma K \sqrt{L_T}}{1-\gamma} 
$$
\end{theorem}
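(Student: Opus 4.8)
The plan is to bound the per-step Bellman-style discrepancy between $Q^\pi$ and $\hat Q^\pi$ and then propagate it through the discounted sum, exactly as in the simulation-lemma / value-difference arguments used for $\mathrm{DeepMDP}$-style bounds. First I would write the Bellman equations for both Q-functions along the policy $\pi$: $Q^\pi(o,a) = \E[r\mid o] + \gamma\, \E_{o'\sim p(\cdot\mid o,a),\,a'\sim\pi}[Q^\pi(o',a')]$ and $\hat Q^\pi(z,a) = \E[\hat r\mid z] + \gamma\, \E_{z'\sim q_\chi(\cdot\mid z,a),\,a'\sim\pi}[\hat Q^\pi(z',a')]$, where $\hat r$ is the reward predicted by $q_\eta$. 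Subtracting and inserting/removing the term $\gamma\,\E_{z'\sim q_\chi}[\hat Q^\pi(z',a')]$ with the dynamics pushed through the true kernel, the difference splits into (i) a reward-mismatch term, (ii) a dynamics-mismatch term that the Lipschitz/bounded-seminorm assumption $|\E_{P}V - \E_{Q}V|\le K D_{\mathrm{TV}}(P,Q)$ converts into $\gamma K\, D_{\mathrm{TV}}(p(\cdot\mid o,a),\,q_\chi(\cdot\mid z,a))$, and (iii) a recursive copy of the value gap evaluated at the next state-action pair.

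Next I would take expectations under the on-policy occupancy $d^\pi(o,a)$ (with $z\sim q_\phi(z\mid o)$). The key point is that the next-state term in (iii), being an expectation of the gap under the induced distribution on $(o',a')$, is again governed by $d^\pi$ up to the discount factor, so summing the geometric recursion yields a prefactor $\tfrac{1}{1-\gamma}$ in front of the one-step reward and dynamics errors. For the reward term I would bound $\E_{d^\pi}|\,\E[r\mid o]-\E[\hat r\mid z]\,|$ by the expected total-variation distance between $p(r\mid o)$ and $q_\eta(r\mid z)$ (rewards bounded in $[0,1]$), and then apply Pinsker's inequality together with Jensen's inequality ($\E\sqrt{\cdot}\le\sqrt{\E\cdot}$) to replace the expected TV by $\sqrt{L_R}$ where $L_R=\E\,D_{\mathrm{KL}}(p(r\mid o)\,\|\,q_\eta(r\mid z))$. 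Identically, the dynamics term becomes $\gamma K\sqrt{L_T}$ via Pinsker plus Jensen on $L_T=\E\,D_{\mathrm{KL}}(p(o'\mid o,a)\,\|\,q_\chi(z'\mid z,a))$. Collecting the two contributions with the $\tfrac{1}{1-\gamma}$ factor gives exactly $\dfrac{\sqrt{L_R}+\gamma K\sqrt{L_T}}{1-\gamma}$.

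The main obstacle — or rather the place that needs care — is matching the measure spaces when comparing $p(o'\mid o,a)$ (a distribution over observations) with $q_\chi(z'\mid z,a)$ (a distribution over latents): strictly one must compare $q_\phi\!\circ\! p$, the pushforward of the true next-observation distribution through the encoder, against $q_\chi$, and interpret $L_T$ accordingly; I would state this identification explicitly (as the paper implicitly does by writing $D_{\mathrm{KL}}(p(o'\mid o,a)\,\|\,q_\chi(z'\mid z,a))$) and note that the data-processing inequality only makes this replacement lossy in the favorable direction. A secondary subtlety is that the bounded-seminorm hypothesis is stated for $\hat V^\pi$, so in step (iii) I would pass from $\hat Q^\pi$ to $\hat V^\pi$ by taking the expectation over $a'\sim\pi$ before applying the Lipschitz bound to the next-$z'$ integral; the rest is the routine geometric-series bookkeeping, which I would not spell out in full.
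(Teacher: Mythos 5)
Your proposal is correct and follows essentially the same route as the paper's proof: a Bellman-style decomposition into a reward-mismatch term, a dynamics-mismatch term controlled by the bounded-seminorm assumption and Pinsker's inequality, and a recursive copy of the value gap that is unrolled geometrically to produce the $\frac{1}{1-\gamma}$ factor. If anything, you are more careful than the paper about the Jensen step ($\E\sqrt{\cdot}\le\sqrt{\E\,\cdot}$) needed to pass from the expected per-sample KL to $\sqrt{L_R}$ and $\sqrt{L_T}$, and about the measure-space identification implicit in writing $D_{\mathrm{KL}}(p(o'|o,a)\,\|\,q_\chi(z'|z,a))$.
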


\begin{proof}
\begin{align*}
   & \E_{(o,a)\sim d^\pi(o,a),z\sim q_\phi(z|o)}[Q^\pi(o,a) - \hat{Q}^\pi(z,a)] \\
   &\leq\E_{(o,a)\sim d^\pi(o,a),z\sim q_\phi(z|o)}(|r(o,a) - \hat{r}(z,a)| + \gamma|\E_{o'\sim p(o'|o,a)}V(o') - \E_{z'\sim q_\chi(z'|z,a)}\hat{V}(z')|) \\
   &\leq \sqrt{D_{\text{KL}}(p(r|o)||q_\eta(r|z))} + \E_{(o,a)\sim d^\pi(o,a),z\sim q_\phi(z|o)}(\gamma|\E_{o'\sim p(o'|o,a),z'\sim q_\phi(z'|o')}[V(o') - \hat{V}(z')]| \\ &+ \gamma|\E_{o'\sim p(o'|o,a),z'\sim q_\chi(z'|z,a)}[V(o') - \hat{V}(z')]|)\\
   &\leq \sqrt{L_R} + \E_{(o,a)\sim d^\pi(o,a),z\sim q_\phi(z|o)}(\gamma|\E_{o'\sim p(o'|o,a),z'\sim q_\phi(z'|o')}[V(o') - \hat{V}(z')]|) \\ &+ \E_{(o,a)\sim d^\pi(o,a),z\sim q_\phi(z|o)} (\gamma K D_{\text{TV}}(p(o'|o,a) || q_\chi(z'|z,a))) \\
   &\leq \sqrt{L_R} + \gamma \E_{(o,a)\sim d^\pi(o,a),z\sim q_\phi(z|o)}\E_{o'\sim p(o'|o,a),z'\sim q_\phi(z'|o')}[V(o') - \hat{V}(z')]) \\ &+ \gamma K \sqrt{D_{\text{KL}}(p(o'|o,a) || q_\chi(z'|z,a))}\\
   &\leq \sqrt{L_R} + \gamma \E_{(o,a)\sim d^\pi(o,a),z\sim q_\phi(z|o)}[V(o) - \hat{V}(z)]) + \gamma K \sqrt{L_T}\\
     &\leq \sqrt{L_R} + \gamma \E_{(o,a)\sim d^\pi(o,a),z\sim q_\phi(z|o)}[Q(o,a) - \hat{Q}(z,a)]) + \gamma K \sqrt{L_T}
\end{align*}
So, we have shown that
$$\E_{(o,a)\sim d^\pi(o,a),z\sim q_\phi(z|o)}[Q^\pi(o,a) - \hat{Q}^\pi(z,a)] \leq \frac{\sqrt{L_R} + \gamma K \sqrt{L_T}}{1-\gamma} 
$$
\end{proof}

We can see that, as $L_R, L_T\rightarrow 0$, the Q-function in the representation space of the MDP, $\hat{Q}$, becomes increasingly closer to that of the original MDP, $Q$. Since this result holds for all Q-functions, it also holds for the optimal $Q^*$ and $\hat{Q}^*$. This result extends sufficiency results in prior works~\cite{mi} to a setting with stochastic encoders and KL-divergence losses on forward dynamics and reward, as opposed to Wasserstein metrics~\citep{deepmdp}, and bisimulation metrics~\citep{dbc}.

\subsection{Behavioral Similarity metric details}
\label{sec:appendixmetric}
Given the sets $\gS_z = \{z^i\}_{i=1}^n$ and $\gS_{z_{\text{gt}}} = \{z^i_{\text{gt}}\}_{i=1}^n$, we construct complete graphs by connecting every vertex with every other vertex through an edge. The weight of an edge is the Euclidean distance between the respective pairs of vertices. The label of each vertex is an unique integer and corresponding vertices in the two graphs are labelled similarly. Let the resulting graphs be denoted as $G_z=(V_z,E_z)$ and $G_{z_{\text{gt}}}=(V_{z_{\text{gt}}},E_{z_{\text{gt}}})$ respectively. Note that both these graphs are \textit{shortest path} graphs, by construction. 

Since the Euclidean distances in both the graphs cannot be directly compared, we scale the distances such that the shortest edge weight among all edge weights are equal in both the graphs. Scaling every edge weight by the same number doesn't change the structure of the graph. Now, we define the shortest path kernel to measure similarity between the two graphs, as follows

$$k(G_z,G_{z_{\text{gt}}}) =\frac{1}{|E_z |} \sum_{e_i\in E_z}\sum_{ e_j \in E_{z_{\text{gt}}}} \hat{k}(e_i,e_j) $$ 

Let, $e_i=\{u_i,v_i\}$ and $e_j=\{u_j,v_j\}$. Here, the kernel $\hat{k}$ measures similarity between the edges $e_i, e_j $ and the labels on respective vertices. Let the function $l(\cdot)$ denote the labels of vertices and the weights of edges. 

$$ \hat{k}(e_i,e_j) = k_v(l(u_i),l(u_j))k_e(l(e_i),l(e_j))k_v(l(v_i),l(v_j)) + k_v(l(u_i),l(v_j))k_e(l(e_i),l(e_j))k_v(l(v_i),l(u_j)) $$

Here, $k_v$ is a Dirac delta kernel i.e. $k_e(x,y) = 1 - \delta(x,y)$ (Note that $\delta(x,y) = 1$ iff $x=y$, else $\delta(x,y) = 0$)  and $k_e$ is a Brownian ridge kernel, $k_e(x,y) = \frac{1}{c}\max(0, c-|x-y|)$, where $c$ is a large number. 

So, in summary, the value of $\hat{k}(e_i,e_j)$ is low when the edge lengths between corresponding pairs of vertices in both the graphs are close. Hence, the value of $k(G_z,G_{z_{\text{gt}}})$ is low when a large number of pairs of corresponding vertices in both the graphs have the same edge length.

 We expect methods that recover latent state representations which better reflect the true underlying simulator state (i.e., the positions of the joints) to have higher values according to this metric.

\subsection{Description of the distractors}
\noindent\textbf{ILSVRC dataset}~\citep{ILSVRC} The distractors used for the main experiments are natural video backgrounds with RGB images from the ILSVRC dataset. We use 200 videos during training, and reserve 50 videos for testing. An illustration of this for different environments are shown in Fig.~\ref{fig:distractors1}. These images are snapshots of what is received by the algorithm (64x64x3 images) and no information about proprioceptive features is provided. While natural videos that are unrelated to the task might be easy to ignore, realistic scenes might have other elements that resemble the controllable elements, but are not actually controllable (e.g., other cars in a driving scenario). 

To emulate this, we also add distractors that resemble other potentially controllable robots, as shown for example in Fig.~\ref{fig:distractors1} (top left and bottom right frames), but are not actually influenced by the actions of the algorithm. These are particularly challenging because the background video has a pre-recorded motion of the same (or similar) agent that is being controlled. We include such challenging agent-behind-agent background videos with a frequency of 1 in very 3 videos.

For the experiments in Table~\ref{tb:levels}, examples of different levels of distractors are shown in Fig.~\ref{fig:levels}. The three different levels have distractor windows of size 32x32, 40x40, and 64x64 respectively. 

\textcolor{black}{\noindent\textbf{Kinetics dataset}~\citep{kay2017kinetics} For comparing directly with results in prior works DBC~\citep{dbc}, and TIA~\citep{tia}, we evaluate on the setting where the random videos are grayscale images from the Kinetics dataset \textit{driving car} class. The settings are same as in the prior works and we compare directly to the results in the respective papers. }

\subsection{Training and Network details}
\label{sec:appendixtraining}
The agent observations have shape 64 x 64 x 3, action dimensions range from 1 to 12, rewards per time-step are scalars between 0 and 1. All episodes have randomized initial states, and the initial distractor video background is chosen randomly.

We implement our approach with TensorFlow 2 and use a single Nvidia V100 GPU and 10 CPU cores for each training run. The training time for 1e6 environment steps is 4 hours on average. In comparison, the average training time for 1e6 steps for Dreamer is 3 hours, for CDreamer is 4 hours, for TIA is 4 hours, for DBC is 4.5 hours, and for DeepMDP is 5 hours. 

The encoder consists of 4 convolutional layers with kernel size 4 and channel numbers 32, 65, 128, 256. The forward model consists of deterministic and stochastic parts, as in standard in visual model-based RL~\citep{planet,dreamer}. The stochastic states have size 30 and deterministic state has size 200. The compatibility function  $f_\theta(z_t,o_{t}) = \exp{(\hat{z}_t^TW_\theta z_{t})}$  for contrastive learning is learned with a 200 x 200 $W_\theta$ matrix.  Here, $\hat{z}_{t}$ is the encoding of $o_{t}$. All other models are implemented as three dense layers of size 300 with ELU activations. 

We use ADAM optimizer, with  learning rate of 6e-4 for the latent-state space model, and 8e-5 for the value function and policy optimization. The hyper-parameter $c_0$ for the prioritized information constraint is set to $1000$, after doing a grid-search over the range $[100,10000]$ and observing similar performance for $1000$ and $10000$. 100 training steps are followed by 1 episode of interacting with the environment with the currently trained policy, for observing real transitions. The dataset is initialized with 7 episodes collected with random actions in the environment. We kept the hyper-parameters of \algoName same across all environments and distractor types.

\textcolor{black}{For fair comparisons, we kept all the common hyper-parameter values same as Dreamer~\citep{dreamer}. This was the protocol followed in prior works TIA~\citep{tia} and~\cite{cvrl}. For the baseline TIA, we tuned the environment-specific parameters $\lambda_{R_{adv}}$ and $\lambda_{O_s}$ as mentioned in Table 2 of~\citep{tia}. For $\lambda_{R_{adv}}$, we performed a gridsearch over the range $10k - 50k$ and for $\lambda_{O_s}$ we performed a gridsearch over the range $1-3$ to obtain the parameters for best performance, which we used for the plots in Fig.~\ref{fig:mainresult} and Fig.~\ref{fig:empbaselines}. Respectively for walker-walk, cheetah-run, finger-spin, quadruped-walk, hopper-stand, ball-in-a-cup-catch, cartpole-balance-sparse, quadruped-run, the values of  $\lambda_{R_{adv}}$ are 30k,30k,20k,40k,30k,40k,20k,30k. The values of $\lambda_{O_s}$ are 2,2,1.5,2.5,2.5,2,2,2.}

\textcolor{black}{For baseline DBC~\citep{dbc}, we kept all the parameters same as in Table 2 of the paper, because DBC does not have any environment-specific parameters and the same values were used for all environments in the DBC paper.  The DeepMDP agent and its hyperparameters are adapted from the implementation provided in the github repo of DBC~\citep{dbc}.}

\begin{figure}
    \centering
    \includegraphics[width=0.7\linewidth]{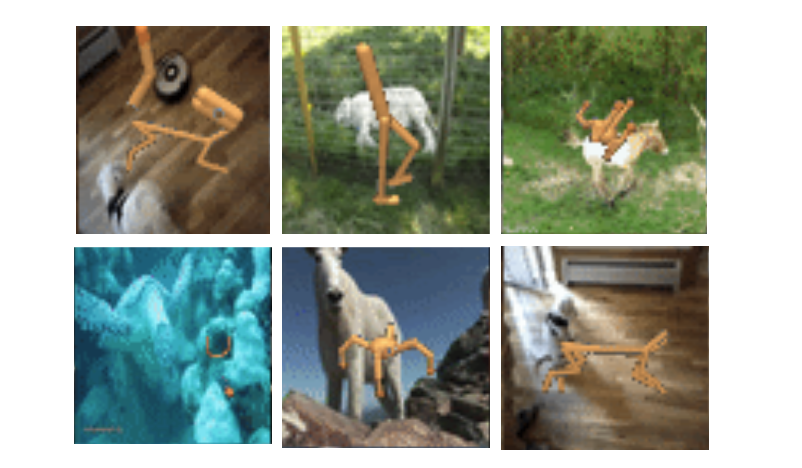}
    \caption{Illustration of the natural video background distractors used in our experiments. The videos change after every 50 time-steps. Some video backgrounds (for example the top left and the bottom right) have more complex distractors in the form of agent-behind-agent i.e. an agent of similar morphology moves in the background of the agent being controlled.}
    \label{fig:distractors1}
\end{figure}
\begin{figure}
    \centering
    \includegraphics[width=0.7\linewidth]{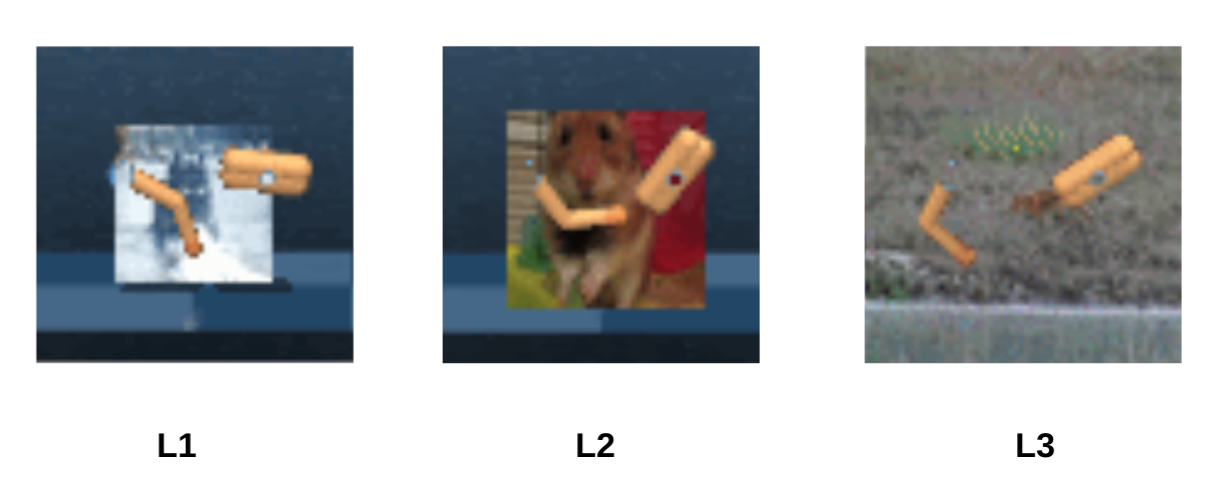}
    \caption{Illustration of the different levels of distractors. L1, L2, and L3 respectively have distractor windows of size 32x32, 40x40, and 64x64.}
    \label{fig:levels}
\end{figure}

\subsection{Contrastive Learning}
Since the distracting backgrounds are changing videos of a certain duration, for contrastive learning, we contrast both across time and across batches. Contrasting across time helps model invariance against temporally contiguous distractors (for example in the same video) while contrasting across batches helps in learning invariance across different videos. Concretely, we sample batches from the dataset $^{(i)}\{(a_t,o_t,r_t)\}_{t=1}^H$, where $i$ denotes a batch index, and for each observation $o_t$ in batch $i$, all observations $o_{t'}$ both in batch $i$ and in other batches $j\neq i$ are considered negative samples.

\subsection{Result on varying distractor levels}
\vspace*{-0.2cm}
We consider different levels of distractors by varying the size of the window where distractors in the background are active. Fig.~\ref{fig:levels} in the Appendix illustrates this visually. Table~\ref{tb:levels} shows results for the different approaches on varying distractor levels. The size of the frame for distractors at levels L1, L2, and L3 respectively are 32x32, 40x40, and 64x64.  We observe that the baselines perform worse as the distractor window size increases. While, for \algoName, the performance decrease with increasing level is minimal. This indicates the effectiveness of \algoName in filtering out background distractors from the observations while learning latent representations. Note that the distractors in Figs.~\ref{fig:mainresult} and~\ref{fig:ablations} are of level 3 i.e. same size as the observations 64x64. 
\begin{table}[t]
\centering
\caption{DM Control Walker Stand with natural video distractors at different levels. We tabulate the rewards (Rew) and behavioral similarity (Sim) at 500k and 1M steps of training. Higher is better for both Rew and Sim.}
\begin{tabular}{@{}cccccc@{}}
\toprule
Name                          & Levels & Rew@500k & Rew@1M & Sim@500k & Sim@1M \\ \midrule
Dreamer            & L1        &   $197\pm31$   & $240\pm27$  & 0.73$\pm0.02$ & 0.74$\pm0.01$\\
DBC            & L1        &   $261\pm25$   & $390\pm34$  & 0.75$\pm0.03$ &0.74$\pm0.02$\\
C-Dreamer            & L1        &   $291\pm34$   & $590\pm26$  & 0.73$\pm0.02$ &0.79$\pm0.01$\\
DeepMDP            & L1        &   $263\pm31$   & $340\pm22$  & 0.74$\pm0.02$ &0.75$\pm0.03$\\
\algoName & L1       &   $\mathbf{397\pm22}$   & $\mathbf{650\pm100}$ &  $\mathbf{0.82}\pm0.01$ &  $\mathbf{0.84}\pm0.03$ \\ \midrule
Dreamer            & L2        &   $180\pm36$   & $197\pm20$  & 0.56$\pm0.03$ &0.59$\pm0.02$\\
DBC            & L2        &   $221\pm28$   & $320\pm33$  & 0.65$\pm0.02$ &0.66$\pm0.02$\\
C-Dreamer            & L2        &   $282\pm30$   & $550\pm66$  & 0.63$\pm0.01$ & 0.71$\pm0.02$\\
DeepMDP            & L2        &   $213\pm34$   & $300\pm25$  & 0.59$\pm0.02$ & 0.61$\pm0.04$\\
\algoName & L2       &   $\mathbf{394\pm30}$   & $\mathbf{644\pm101}$ & $\mathbf{0.77}\pm0.03$ & $\mathbf{0.77}\pm0.03$ \\ \midrule
Dreamer            & L3        &   $140\pm52$   & $157\pm10$  & 0.32$\pm0.02$ & 0.33$\pm0.03$\\
DBC            & L3        &   $165\pm20$   & $221\pm24$  & 0.45$\pm0.01$ & 0.48$\pm0.01$\\
C-Dreamer            & L3        &   $231\pm39$   & $485\pm86$  & 0.58$\pm0.02$ & 0.64$\pm0.01$\\
DeepMDP            & L3        &   $153\pm23$   & $212\pm28$  & 0.44$\pm0.02$ & 0.49$\pm0.01$\\
\algoName & L3       &   $\mathbf{389\pm20}$   & $\mathbf{624\pm80}$ & $\mathbf{0.71}\pm0.01$  & $\mathbf{0.74}\pm0.03$  \\\bottomrule
\end{tabular}
 \label{tb:levels}
\end{table}

\subsection{Results on distractors from the Kinetics dataset}

\textcolor{black}{In this section, we evaluate \algoName on the same distractors used in prior works, DBC and TIA. The distractor videos are from the Kinetics dataset~\citep{kay2017kinetics} \textit{Driving Car} class, and converted to grayscale. From Table~\ref{tb:kinetics} we see that \algoName slightly outperforms the baselines. We believe the performance gap is not as significant as Fig.~\ref{fig:mainresult} because the distractors being grayscale images, and not RGB might be easier to ignore by the baselines as well as \algoName.} 

\begin{table}[t]
\centering
\caption{Comparison of baselines TIA and DBC with \algoName on environments with video distractors from the Kinetics dataset \textit{Driving Car} class. The results for TIA and DBC are from the respective papers.}
\begin{tabular}{@{}ccccccc@{}}
\toprule
                      & \multicolumn{2}{c}{TIA} & \multicolumn{2}{c}{DBC} & \multicolumn{2}{c}{\algoName} \\ \midrule
                      & 500k            & 800k           & 500k            & 800k           & 500k               & 800k              \\ \midrule
Walker-Run   &      389$\pm$45        &     602$\pm$40         &        165$\pm$92       &  210$\pm$32              &      \textbf{417$\pm$22}              &         \textbf{630$\pm$33}          \\
Cheetah-Run  &        384$\pm$149         &   \textbf{579$\pm$172}             &        261$\pm$64         &      277$\pm$81          &         \textbf{425$\pm$97}           &            572$\pm$158       \\
Hopper-Stand &           338$\pm$221      &   581$\pm$214             &        110$\pm$92             &           207$\pm$120     &         \textbf{395$\pm$140}           &   \textbf{615$\pm$176}           \\
Ball-in-a-Cup &           201$\pm$197      &   115$\pm$110             &       --         &         --    &         \textbf{255$\pm$104}           &   \textbf{263$\pm$116}           \\
Walker-Walk &           878$\pm$37      &   948$\pm$28             &      545$\pm$57          &          630$\pm$68    &         \textbf{925$\pm$18}           &   \textbf{972$\pm$26}           \\
Finger-Spin &           371$\pm$96      &   487$\pm$107            &       \textbf{801$\pm$10}           &       \textbf{832$\pm$4}     &         \textbf{795$\pm$34}           &   787$\pm$58           \\
Hopper-Hop &          47$\pm$44      &  \textbf{72$\pm$68}            &       5$\pm$12        &      0$\pm$0     &         \textbf{85$\pm$26}           &   \textbf{77$\pm$74}           \\
\bottomrule
\end{tabular}
\label{tb:kinetics}
\end{table}

\newpage
\subsection{Setting with only shifted agent behind agent distractors}
\begin{wrapfigure}[10]{r}{0.4\textwidth}
    \centering
    \includegraphics[width=\linewidth]{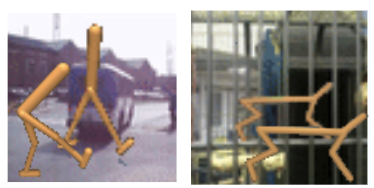}
     \caption{Ilustration of the distractor setting for results in Table~\ref{tb:challenging}.}
    \label{fig:challenging}
\end{wrapfigure}
\textcolor{black}{In this section we evaluate \algoName with the baselines only for the challenging setting of agent-behind-agent distrators where the background contains the same agent being controlled (a walker or a cheetah). An illustration of this is shown in Fig.~\ref{fig:challenging}. From the results in Table~\ref{tb:challenging}, we see that \algoName significantly outperforms the competitive baselines both at 500k and 1M environment interactions. This suggests the utility of \algoName in separating out informative elements of the observation from uninformative ones even in challenging settings.} 

\begin{table}[h!]
\centering
\caption{Evaluation of \algoName and the most competitive baselines in a suite of challenging distractors where the background contains a shifted version of the agent being controlled. Results are averaged over 4 random seeds. Fig.~\ref{fig:challenging} shows a visualization of the distractors for this setting.}
\begin{tabular}{@{}ccccc@{}}
\toprule
          & \multicolumn{2}{c}{Walker-Walk-shifted}   & \multicolumn{2}{c}{Cheetah-Run-shifted}   \\ \midrule
          & 500k                & 1M                  & 500k                & 1M                  \\ \midrule
DBC       & 35$\pm$15           & 104$\pm$28          & 41$\pm$10           & 78$\pm$38           \\
CDreamer  & 108$\pm$36          & 164$\pm$45          & 73$\pm$44           & 110$\pm$52          \\
TIA       & 79$\pm$42           & 152$\pm$33          & 65$\pm$24           & 115$\pm$28          \\
\algoName & \textbf{168$\pm$48} & \textbf{278$\pm$42} & \textbf{130$\pm$46} & \textbf{207$\pm$35} \\ \bottomrule
\end{tabular}
\label{tb:challenging}
\end{table}

\subsection{Comparison of baselines with empowerment in policy learning}
\textcolor{black}{In this section, we compare against modified versions of the baselines, such that we include empowerment in the policy learning of the baselines. This is the only modification we make to the baselines, and keep everything else unchanged. We described in section 3.3 that inclusion of empowerment in the objective for visual model-based RL (by modifying both the representation learning objective and the policy learning objective) is an important contribution of our paper. The aim of this experiment is to disentangle the benefits of empowerment in representation learning, for \algoName with that in policy learning. }

\textcolor{black}{We have plotted the results in Fig.~\ref{fig:empbaselines}. We can see that the performance of the baselines is slightly better than Fig.~\ref{fig:mainresult} because of the inclusion of empowerment in the policy learning objective which as we motivated previously in section 3.4 helps with exploring controllable regions of the state-space. Note that by adding empowerment, we have effectively modified the baselines. }

\begin{figure}
    \centering
    \includegraphics[width=\textwidth]{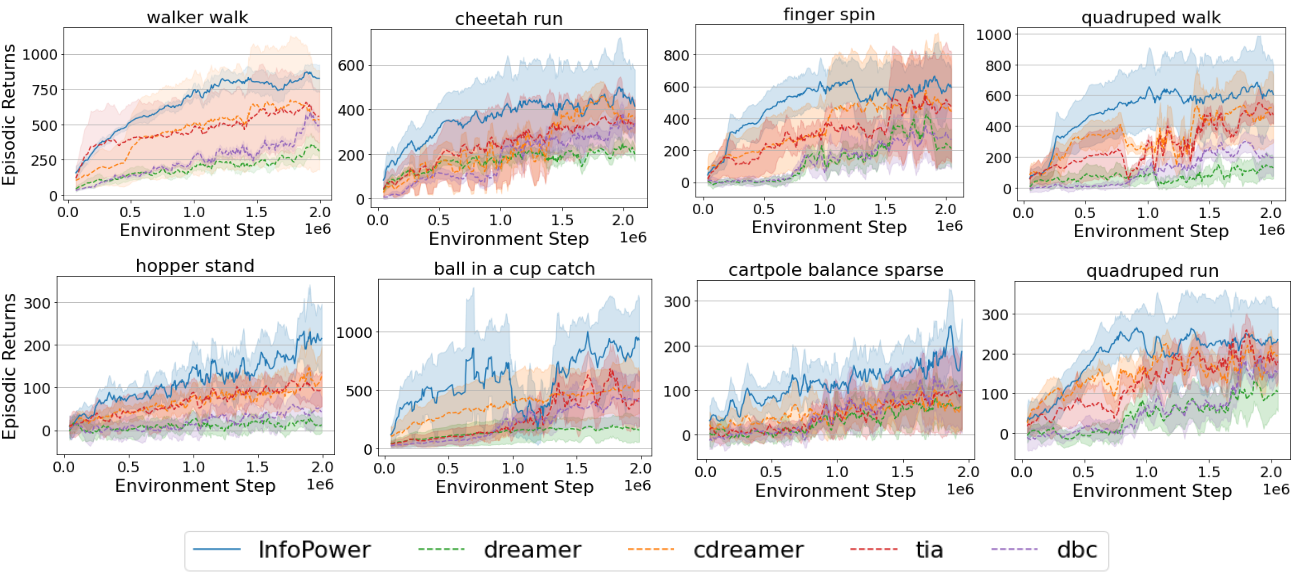}
    \caption{Evaluation of \algoName and baselines with empowerment in policy learning. The setting corresponds to DeepMind Control tasks with RGB natural video distractors in the background (same as that in Figure 5 of the main paper). The x-axis denotes the number of environment interactions and the y-axis shows the episodic returns.}
    \label{fig:empbaselines}
\end{figure}

\subsection{Comparison on environments without distractors}
We compare the performance of \algoName with a state-of-the-art visual model-based RL method, Dreamer in the default DM Control environments without distractors. We see from Table~\ref{tb:nodistractors} that \algoName performs slightly better or similar to Dreamer in all the environments consistently. This result complements our main paper result on environments with distractors and confirms that the benefit of \algoName in challenging distracting environments does not come at a cost of performance in simpler non-distracting environments. 
\begin{table}[h!]
\centering
\caption{Comparison of \algoName with Dreamer in the default DM Control environments without distractors. We tabulate reward at 500k and 1M environment interactions. Results are averaged over 4 random seeds. Higher is better. }
\begin{tabular}{ccccc}
\hline
        & \multicolumn{2}{c}{Dreamer}       & \multicolumn{2}{c}{\algoName}                \\ \hline
            & Rew @ 500k        & Rew @ 1M                  & Rew @ 500k                 & \textbf{ }1M                    \\ \hline
Walker Walk      & 890$\pm$52  & \textbf{985$\pm$ 10}         & \textbf{910$\pm$41}  & 980$\pm$ 17 \\
Cheetah Run      & 580$\pm$190 & \textbf{811$\pm$95} & \textbf{586$\pm$142} & \textbf{810$\pm$102}  \\
Finger Spin      & 570$\pm$187 & 573$\pm$96          & \textbf{590$\pm$172} & \textbf{598$\pm$85}   \\
Quadruped Walk        & 592$\pm$45  & 646$\pm$50          & \textbf{623$\pm$65}  & \textbf{666$\pm$47}   \\
Hopper Stand     & 701$\pm$68  & 906$\pm$33          & \textbf{715$\pm$60}  & \textbf{930$\pm$39}   \\
Ball in a Cup    & 871$\pm$102 & 910$\pm$48          & \textbf{913$\pm$67}  & \textbf{939$\pm$52}   \\
Cart-Pole Sparse & 742$\pm$58  & 849$\pm$88          & \textbf{776$\pm$40}  & \textbf{866$\pm$80}   \\
Quadruped Run         & 450$\pm$47  & \textbf{515$\pm$65} & \textbf{468$\pm$42}  & 510$\pm$61            \\ \hline
\end{tabular}
\label{tb:nodistractors}
\end{table}

\subsection{Additional ablations}
In this section, we consider additional ablations, namely a version of of \algoName that does not have the empowerment objective, a version of \algoName that does not have the MI maximization objective. We see from Table~\ref{tb:ablationmore} that removing either of these terms drops performance.

\begin{table}[h!]
\centering
\caption{DM Walker Stand with natural video distractors}
\begin{tabular}{@{}cccccc@{}}
\toprule
Name                          & Obs bound & Rew@500k & Rew@1M & Sim@500k & Sim@1M
\\ \midrule
Dreamer          & BA        &   $140\pm52$   & $157\pm10$  & 0.32$\pm0.02$ & 0.33$\pm0.03$\\
Contrastive-Dreamer         & NCE       &   $231\pm39$   &  $485\pm86$ & 0.58$\pm0.02$ & 0.64$\pm0.01$\\
NonGenerative-Dreamer & NWJ       &   $280\pm25$   & $480\pm93$ &0.59$\pm0.01$ & 0.62$\pm0.05$\\
\algoName   & NCE       &  $254\pm27$     &   $481\pm30$ & 0.69$\pm0.03$ & 0.72$\pm0.02$\\
\algoName & NWJ       &   $\mathbf{389\pm20}$   & $\mathbf{624\pm80}$ &0.71$\pm0.01$ & 0.74$\pm0.03$\\ 
Inverse only            & None      &  $282\pm70$    &  $367\pm24$  &0.63$\pm0.04$ & 0.66$\pm0.03$\\
\bottomrule
\end{tabular}
\label{tb:ablationmore}
\end{table}

\end{document}